\documentclass{article}

\usepackage{microtype}
\usepackage{graphicx}
\usepackage{subcaption}
\usepackage{multirow}
\usepackage{booktabs} % for professional tables

\usepackage{hyperref}

\usepackage[accepted]{arxiv}

\usepackage{amsmath}
\usepackage{amssymb}
\usepackage{mathtools}
\usepackage{amsthm}
\usepackage[most]{tcolorbox}

\usepackage[capitalize,noabbrev]{cleveref}

\theoremstyle{plain}
\newtheorem{theorem}{Theorem}[section]

\theoremstyle{definition}

\theoremstyle{remark}

\usepackage[textsize=tiny]{todonotes}

\usepackage{xcolor}

\newcommand{\methodname}{BALOR}
\newcommand{\causualmodel}{Llama3.2-1b}

\tcbset{
  colback=gray!5,
  colframe=black!40,
  boxrule=0.4pt,
  arc=2pt,
  left=4pt, right=4pt, top=4pt, bottom=4pt,
  fonttitle=\bfseries,
}

\icmltitlerunning{Faithful-Patchscopes}

\begin{document}

\twocolumn[
% \icmltitle{BALOR: Bias Alignment through Logit Recalibration for Explaining Hidden Representations of Large Language Models}
\icmltitle{Faithful-Patchscopes: Understanding and Mitigating Model Bias in Hidden Representations Explanation of Large Language Models }
% It is OKAY to include author information, even for blind
% submissions: the style file will automatically remove it for you
% unless you've provided the [accepted] option to the icml2025
% package.

% List of affiliations: The first argument should be a (short)
% identifier you will use later to specify author affiliations
% Academic affiliations should list Department, University, City, Region, Country
% Industry affiliations should list Company, City, Region, Country

% You can specify symbols, otherwise they are numbered in order.
% Ideally, you should not use this facility. Affiliations will be numbered
% in order of appearance and this is the preferred way.
\icmlsetsymbol{equal}{*}

\begin{icmlauthorlist}
\icmlauthor{Xilin Gong}{uga}
\icmlauthor{Shu Yang}{kaust}
\icmlauthor{Zehua Cao}{xxx}
\icmlauthor{Lynne Billard}{uga}
\icmlauthor{Di Wang}{kaust}

\end{icmlauthorlist}

\icmlaffiliation{uga}{University of Georgia}
\icmlaffiliation{kaust}{King Abdullah University of Science and Technology}
\icmlaffiliation{xxx}{Hong Kong Center for Construction Robotics}

% \icmlcorrespondingauthor{Lynne Billard}{first1.last1@xxx.edu}
% \icmlcorrespondingauthor{Di Wang}{first2.last2@www.uk}

% You may provide any keywords that you
% find helpful for describing your paper; these are used to populate
% the "keywords" metadata in the PDF but will not be shown in the document
\icmlkeywords{Machine Learning, ICML}

\vskip 0.3in
]

% this must go after the closing bracket ] following \twocolumn[ ...

% This command actually creates the footnote in the first column
% listing the affiliations and the copyright notice.
% The command takes one argument, which is text to display at the start of the footnote.
% The \icmlEqualContribution command is standard text for equal contribution.
% Remove it (just {}) if you do not need this facility.

\printAffiliationsAndNotice{}  % leave blank if no need to mention equal contribution
%\printAffiliationsAndNotice{\icmlEqualContribution} % otherwise use the standard text.

\begin{abstract}
Large Language Models (LLMs) have demonstrated strong capabilities for hidden representation interpretation through Patchscopes~
\cite{ghandeharioun2024patchscopesunifyingframeworkinspecting}, a framework that uses LLMs themselves to generate human-readable explanations by decoding from internal hidden representations. However, our work shows that LLMs tend to rely on inherent linguistic patterns, which can override contextual information encoded in the hidden representations during decoding. For example, even when a hidden representation encodes the contextual attribute “purple” for “broccoli”, LLMs still generate “green” in their explanations, reflecting a strong prior association. This behavior reveals a systematic unfaithfulness in Patchscopes. To systematically study this issue, we first designed a dataset to evaluate the faithfulness of Patchscopes under biased cases, and our results show that there is an 18.84\% faithfulness decrease on average. We then propose \underline{B}ias \underline{A}lignment through \underline{Lo}git \underline{R}ecalibration (BALOR), which treats the output logits from an unpatched prompt as capturing model bias and contrasts them with logits obtained under patched contextual information. By recalibrating the logit distribution through this contrast, BALOR suppresses model bias and amplifies contextual information during generation. Experiments across multiple LLMs demonstrate that \methodname{} consistently outperforms existing baselines, achieving up to 33\% relative performance improvement.

\end{abstract}

\section{Introduction}
Large language models (LLMs) work by transforming input tokens into high-dimensional vectors and repeatedly feeding them through multiple layers of feed-forward transformations. The outputs of each feed-forward transformation are hidden representations that fundamentally govern how models encode attributes, reason over context, and ultimately generate predictions~\cite{orgad2025llms,DBLP:journals/corr/abs-2507-10155}. Thus, understanding what information is stored and how it is used inside these hidden representations is essential for building trustworthy LLMs. Hidden representation explanation methods, such as linear probing~\cite{alain2018understandingintermediatelayersusing, belinkov-2022-probing,DBLP:journals/tacl/ChenWYZWLWYCW25,DBLP:journals/corr/abs-2511-06419,DBLP:journals/corr/abs-2510-10205}, train a linear classifier to analyze whether content can be extracted from hidden representations, and causal interventions~\cite{vig2020causalmediationanalysisinterpreting, meng2023locatingeditingfactualassociations,DBLP:journals/corr/abs-2508-02087,DBLP:conf/acl/YaoYXHLW25,DBLP:conf/emnlp/HongZHZ0Y24,DBLP:conf/icml/0003LKCH025} change keywords in the input prompts to analyze the changes in hidden representations. 
More recent work, Patchscopes~\cite{ghandeharioun2024patchscopesunifyingframeworkinspecting} explains hidden representations by using the LLMs themselves as interpreters. As illustrated in Figure~\ref{fig:introexample}, a source prompt provides contextual information describing broccoli as purple. Then, the hidden representation of broccoli, which corresponds to the purple attribute value, is extracted. This hidden representation is then patched into a target prompt by replacing the hidden representation at a placeholder token $x$ during the target prompt’s inference process. Conditioned on the patched broccoli's hidden representation, the model generates a natural-language response that serves as an explanation of the information encoded in the patched hidden representation.

\begin{figure}[t]
    \centering
    \includegraphics[
        width=\linewidth,
        trim=70 62 80 70,
        clip
    ]{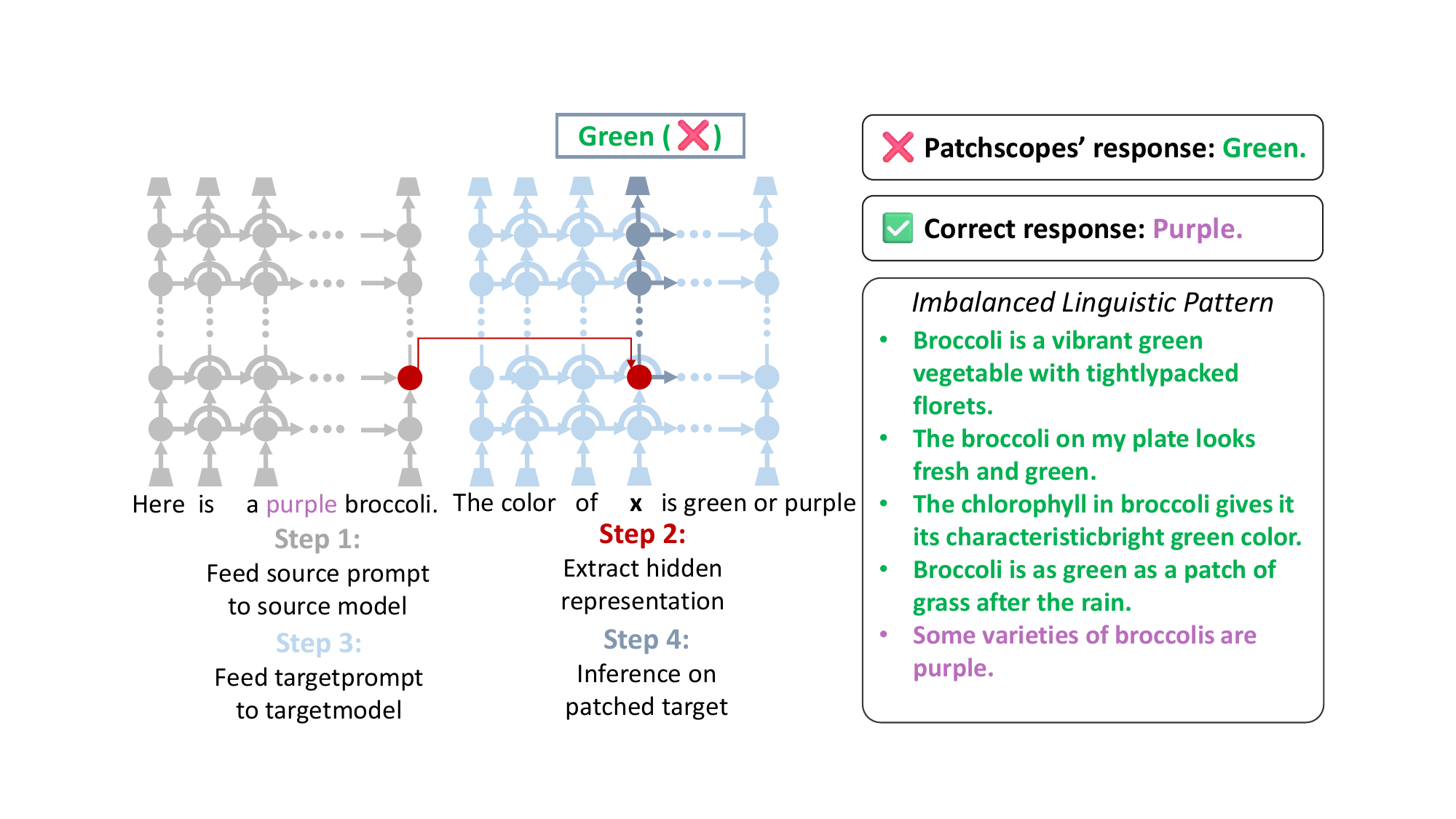}
    %\vspace{-6pt}
    \caption{Overview of the Patchscopes framework and an example illustrating its unfaithfulness under an imbalanced linguistic pattern. The patched hidden representation encodes the contextual attribute ``purple'' for ``broccoli'', but under Patchscopes' framework, LLMs still generate ``green'' in their explanations.}
    \label{fig:introexample}
    %\vspace{-10pt}
\end{figure}

Despite the promising nature of the Patchscopes framework, it cannot guarantee that the content generated for explanation faithfully reflects the information contained in the hidden representations~\cite{lyu-etal-2024-towards, bhan2025didifaithfullysay, hu2025monicarealtimemonitoringcalibration, yang2026investigatingcotmonitorabilitylarge}. Our work observed that the generated explanation content is heavily affected by the model's inherent bias, which is induced by imbalanced linguistic patterns and prevents models from faithfully reflecting the contextual information in their hidden representations. For example, as Figure~\ref{fig:introexample} shows, in most linguistic patterns, broccoli is associated with the green color attribute values, while only a few are associated with purple. When we want to explain the color-related hidden representations of broccoli encoded with this less-frequent purple attribute, Patchscopes ignores the purple information in this hidden representation and instead generates the ``green'' attribute.
While Figure~\ref{fig:introexample} provides an illustrative example of this phenomenon, we further validate this effect through a systematic controlled analysis in Section~\ref{sec:casual_analysis}. We demonstrate that the faithfulness of the Patchscopes framework is distorted by this model bias, which leads to an 11.84\% to 28.47\% decrease in explanation faithfulness. This observation highlights a fundamental challenge in Patchscope: during generation, LLMs often rely on high-frequency associations learned from imbalanced linguistic patterns instead of the contextual information encoded in their hidden representations. %This unfaithfulness is a significant limitation of the current Patchscopes framework because we cannot guarantee that the explanation content genuinely reveals what the model encodes, rather than merely reproducing model bias.}

It is notable that current debiasing methods are difficult to integrate directly into the Patchscopes framework, making a simple ``debiasing-then-patching'' pipeline impossible. Supervised finetuning (SFT) requires large annotated datasets to capture both biased and unbiased instances~\cite{lin2022truthfulqa}, which can be very expensive. Also, SFT alters model parameters globally, thereby changing the original hidden representations and undermining Patchscopes’ goal of faithfully explaining what the original model encodes. Prompt-based debiasing methods~\cite{amplayo-etal-2023-query} are training-free, but they depend heavily on prompt designations. The results are unstable and vary with minor lexical or syntactic changes. Steering vector intervention-based methods~\cite{li2025fairsteerinferencetimedebiasing} attempt to remove or steer bias directions within latent space by identifying and manipulating subspaces of internal representations. However, these approaches assume that model bias can be isolated into well-defined and linearly separable directions, and the exact directions corresponding to model bias are unknown and hard to estimate reliably. 

% These conditions are rarely satisfied since we cannot acquire the model bias direction exactly. \di{rewrite, what does this mean?} 

In order to overcome these limitations, we propose \underline{B}ias \underline{A}lignment through \underline{Lo}git \underline{R}ecalibration (\methodname{}), a logit-recalibration decoding method that directly targets the source of distortion, which is motivated by the knowledge that logit-level probability distributions directly determine text generation and reflect the influence of model bias. By subtracting logits obtained from an unpatched contrastive prompt, internal evidence is contrastively separated from empirical priors in \methodname{}. This approach amplifies contextual information encoded in the patched representation while suppressing undesired model bias. In summary, this paper makes three key contributions:
\begin{itemize}
    \item We designed a Q\&A dataset according to linguistic pattern-induced model bias. Our analysis demonstrates that model bias fundamentally limits the faithfulness of hidden-representation explanations in LLMs.
    \item We introduce a logit-recalibration decoding method \methodname{} to mitigate model bias at the inference stage.
    \item We provide extensive experiments across four modern LLMs, demonstrating that \methodname{} substantially improves explanation faithfulness by up to 33\% while remaining robust across hyperparameters and decoding temperatures.
\end{itemize}

\section{Preliminaries}
We follow the framework and notation of Patchhscopes~\cite{ghandeharioun2024patchscopesunifyingframeworkinspecting}.
Let \( S = \langle s_1, \ldots, s_n \rangle \) be a source prompt of length \(n\), and \(M\) be a source LLM with \(L\) layers.
Feeding \(S\) to \(M\) yields hidden representations \( \mathbf{h}^{(\ell)}_i \in \mathbb{R}^d \), where \(i \in \{1,\ldots,n\}\) indexes token positions and \( \ell \in \{1,\ldots,L\} \) indexes layers.
A source representation to be inspected is specified by the tuple \( (S, i, M, \ell) \), corresponding to \( \mathbf{h}^{(\ell)}_i(S) \).

To decode information from this representation, Patchscopes defines a separate target execution.
Let \( T = \langle t_1, \ldots, t_m \rangle \) be a target prompt and \(M^*\) be a target LLM with \(L^*\) layers.
We denote by \( \mathbf{h}^{(\ell^*)}_{i^*}(T) \) the hidden representation at target position \( i^* \in \{1,\ldots,m\} \) and layer \( \ell^* \in \{1,\ldots,L^*\} \) during the forward pass of \(M^*\) on \(T\).

Given an optional mapping function \( f : \mathbb{R}^d \rightarrow \mathbb{R}^{d^*} \), the patching operation intervenes on the target computation by replacing
\[
\mathbf{h}^{(\ell^*)}_{i^*}(T) \;\leftarrow\; f\!\left(\mathbf{h}^{(\ell)}_i(S)\right),
\]
while keeping all other hidden representations unchanged. In this work, we restrict our attention to the case where the source and target models are the same. Consequently, the mapping function $f(\cdot) = \mathrm{id}(\cdot)$.
The modified hidden representations are then propagated through the remaining layers of \(M^*\), producing a patched output distribution over the next token:
\[
p_{\text{patched}}(w_{t+1} \mid T)
= \sigma\!\left(W_o \mathbf{h}^{(L^*)}_t(T) + \mathbf{b}_o\right),
\]
where $\sigma$ denotes the softmax function, \(W_o\) and \(\mathbf{b}_o\) denote the output projection parameters of \(M^*\).

By construction, this intervention halts further contextualization from the source sequence \(S\); thus, any information reflected in the patched generation must be recoverable from the source representation \( \mathbf{h}^{(\ell)}_i(S) \) via the post-patching computation.
%Patchscopes therefore reveal the information encoded in hidden representations by analyzing their causal effect on generation under a controlled target prompt and model configuration.

\section{Inherit Model Bias will Affect Explanation Faithfulness}\label{sec:casual_analysis}
Although the Patchscopes framework is widely used, whether its use is a faithful method remains a problem. Building on the insight that LLMs exhibit model bias, we need to determine how much faithfulness is affected by this. In this section, we design a dataset to analyze the causal relationship between linguistic patterns, model bias, and the faithfulness of hidden representation explanations.  
\subsection{Dataset}\label{sec:dataset}
There are four tasks in this dataset: color, gender, culture, and age. For each datapoint, there will be one $noun$ to be explained and two attribute values $a_{pri}$ and $a_{sec}$ for the model to select. Table~\ref{tab:overview_4tasks} provides an overview of these four tasks. The source and target prompt examples of each task are provided in Appendix~\ref{app:example_task}. The dataset consists of two parts: a biased part and a non-biased one. For datapoints in the biased part, the model exhibits a clear preference toward $a_{pri}$ in the absence of additional contextual information, whereas no such preference is observed for datapoints in the non-biased subset. We use $\mathcal{D}$ for each task in each part.

\textbf{Biased Subset}
The biased subset contains datapoints from all four tasks.
For the color task, we use the category names of the Large Vocabulary Instance Segmentation dataset~\cite{gupta2019lvis} for nouns as item names. Then, the color attribute values related to these nouns are extracted from the RedPajiama subset~\cite{weber2024redpajama}, which is described as the best-effort reproduction of the Llama training dataset. Thus, it can reflect the linguistic patterns in natural language. The color with the highest co-occurrence frequency is set as $a_{pri}$, and the color with the second highest co-occurrence frequency is set as $a_{sec}$.
The nouns are then selected such that the model exhibits a clear preference for $a_{pri}$ in the biased dataset. The details of the selection strategy are provided in Appendix~\ref{app:color_dataset}.
For gender, culture, and age tasks, we extract the nouns and two attribute values from the dataset provided by~\cite{hernandez2024linearity}. We define the model-preferred attribute identified by~\citet{hernandez2024linearity} as $a_{pri}$, and the other one as $a_{sec}$. More details are provided in Appendix~\ref{app:biased_dataset}. 

\textbf{Non-biased Dataset} The non-biased dataset only includes the color task. The datapoints of the color task that are not selected for the biased dataset construct non-biased dataset.

\begin{table}[]
    \centering
    \resizebox{0.8\linewidth}{!}{
    \begin{tabular}{l|ccc}
        \toprule
         Task&Noun& Attribute value  \\
         \midrule
         Color & item name & Color1, Color2 \\
         \midrule
         \multirow{4}{*}{Gender}
         & degree &  He, She\\
         & characteristic & He, She \\
         & person name  &He, She \\
         & occupation & He, She\\
         \midrule
         \multirow{2}{*}{Culture}
         & person name & Country1, Country2\\
         & person name & Religion1, Religion2 \\
         \midrule
         Age & occupation &  Young, Old \\
         
         \bottomrule
    \end{tabular}}
    \caption{Details for each task in the dataset.}
    \label{tab:overview_4tasks}
\end{table}

\subsection{Experiment Settings}\label{sec:casual_expsetting}
\textbf{Dataset and Models} We use the biased and non-biased subsets of the color task to analyze the causal relationship between linguistic patterns, model bias, and Patchscopes faithfulness. Due to the larger size of the non-biased dataset, we perform five repeated experiments with undersampling. Four models are used for experiments: Llama2-7b-chat~\cite{touvron2023llama2openfoundation}, Llama3.2-1b-Instruct, Llama3-8b-Instruct~\cite{grattafiori2024llama3herdmodels}, and Qwen3-4b-Instruct-2507~\cite{yang2025qwen3technicalreport}.

\textbf{Prompts}
The source prompt ($S_i$) is formatted as: \texttt{Here is an \{a$_{i,sec}$\} \{noun$_i$\}.} The target prompt ($T_i$) is formatted as: \texttt{The color of x is \{a$_{i_pri}$\} or \{a$_{i_sec}$\}?} During patching, the position of \texttt{x} in the target prompt defines the target index, while the hidden representations at the \texttt{noun}’ token indices in the source prompt are extracted for interpretation. For simple implementation, we repeat the placeholder \texttt{x} to match the token length of the \texttt{noun}. To mitigate option position bias, we additionally use a counterpart target prompt with the attribute options swapped and report performance averaged over both prompts.

\textbf{Evaluation Metric}
We use Show Rate (SR $\Uparrow$) to evaluate the faithfulness of Patchscopes; SR measures the ratio of responses that LLM generates $a_{sec}$, the attribute value aligns with contextual information, as an explanation. The SR is 
\[
SR = 
\frac{1}{|\mathcal{D}|}
\sum_{\mathcal{D}_i \in \mathcal{D}}
\mathbb{I}\!\left[
M(T_i) = a_{i, sec}
\right]. 
\] 
\subsection{Analysis}\label{sec:casual_analysis}
% We notice that LLM will show strong model bias on some datapoints, for example, it will constantly think that broccoli is green instead of purple when no additional information beyond the noun is provided, although purple broccoli does exist.
In this section, we aim to determine the relationship between linguistic patterns, model bias, and Patchscopes faithfulness.

First, we examine whether model bias arises from the imbalanced linguistic patterns, which are the co-occurrence frequencies between color names and item names here~\cite{brill2024neural, kiyomaru-etal-2024-comprehensive-analysis}. For each noun-color pair, we compute the frequency difference $\Delta f$, while $\Delta f=f_{pri}-f_{sec}$ where $f_{x}$ denotes the co-occurance frequency between $a_{x}$ and $noun$, $x\in \{pri, sec\}$.
Take \causualmodel{} as an example, across five repeated runs, we fit a logistic regression with $\Delta f$ standardized to one standard deviation. The logit coefficient, corresponding odds ratio (OR), and ROC-AUC are used to measure predictive performance. The mean value and $95 \%$ confidence interval of each metric are shown in Table \ref{table:casual_analysis}. It shows that every $+1$ standard deviation increase in $\Delta f$ increases the odds of exhibiting bias by a factor of 9.338 on average, and the ROC-AUC value of 0.619 also shows that the discriminative power suggests that an imbalanced linguistic pattern is a meaningful determinant of model bias. The causal analysis results between linguistic patterns and model bias on additional models are provided in the Appendix~\ref{app:casual}.
\begin{table}[t]
\centering
\resizebox{\linewidth}{!}{
\begin{tabular}{lcc}
\toprule
\textbf{Metric} & \textbf{Mean Value} & \textbf{95\% CI} \\
\midrule
Logit coeff (+1 SD) & 2.234 & (1.487, 2.981) \\
OR (+1 SD)        & 9.338 & (4.425, 19.706) \\
ROC-AUC                       & 0.619 & (0.580, 0.655) \\
\bottomrule
\end{tabular}}
\caption{Repeated undersampling results.}\label{table:casual_analysis}
\end{table}

\begin{figure}[t]
    \centering
    \includegraphics[width=\linewidth]{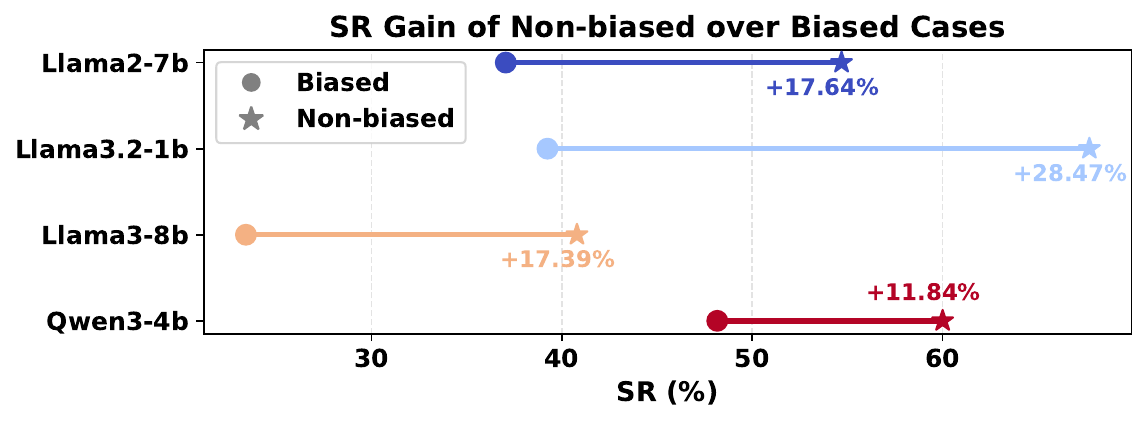}
    \caption{The result of attribute SR for vanilla Patchscopes on biased and non-biased datasets.}
    \label{fig:bias_vs_nobias}
\end{figure}

Then we test the relationship between model bias and Patchscopes'~
\cite{ghandeharioun2024patchscopesunifyingframeworkinspecting} faithfulness on biased and non-biased data. The results are shown in Figure~\ref{fig:bias_vs_nobias}. The performance gap between biased and non-biased datasets shows that model bias indeed undermines the faithfulness of Patchscopes.
%the hidden representation explanation's abilities, which makes this LLM explainability tool not faithful enough under biased conditions. 

%.For \causualmodel{}, biased data which contains 344 cases yield only 135 correct explanations, with $Acc=39.2\%$. In contrast, non-biased data, which contains 802 cases achieve 543 correct explanations, with $Acc=67.7\%$. 
%Thus, the 28.5\% performance gap shows that bias indeed undermines the hidden representation explanation's abilities, which makes this LLM explainability tool not faithful enough under biased conditions. 

\section{Methodology}
\subsection{Locate the Bias-sensitive Layer}

% Identifying the appropriate source and target layers is crucial in \methodname{}, as the degree of bias encoded or manifested in hidden representations varies across the model’s depth. Bias often emerges in specific middle layers where the model begins committing to high-frequency associations learned from training~\cite{lepori-etal-2025-racing}. Therefore, locating these bias-sensitive layers allows patching to intervene where empirical priors dominate reasoning. Consider the source and target model with $n$ layers; there can be $n\times n$ possibilities of a source-target-layer pair. If we exhaustively enumerate to find the optimal layer pair, the time complexity is $O(n^2)$, which would be expensive for LLM inference. Although~\cite{ghandeharioun2024patchscopesunifyingframeworkinspecting} suggested selecting the same layer when source and target models are the same, the time complexity to find the optimal layer pair is still $O(n)$. To efficiently locate the bias-related layer, we design quantitative metrics that indicate how strongly a layer encodes and utilizes the biased attribute.

\citet{lepori-etal-2025-racing} finds that model bias is not uniformly distributed across layers but gradually emerges in intermediate layers. We refer to these intermediate layers, where biased priors are actively encoded and start to influence downstream computation, as bias-sensitive layers. After these layers, model predictions tend to collapse onto empirical priors, making later interventions increasingly ineffective. Thus, locating the bias-sensitive layer is essential for effective bias mitigation and for enabling faithful Patchscopes explanations.

% , as bias is unevenly distributed across a model’s depth and often consolidates in intermediate layers where high-frequency associations begin to dominate reasoning~\cite{lepori-etal-2025-racing}. Once such empirical priors are fully established, intervening at later layers becomes increasingly ineffective.

For a model with $n$ layers, exhaustively searching all source–target layer pairs incurs $O(n^2)$ complexity, which is expensive for LLM inference. Even restricting source and target to the same layer as~\citet{ghandeharioun2024patchscopesunifyingframeworkinspecting} suggests, it still requires an $O(n)$ scan. We therefore design efficient quantitative metrics to identify bias-sensitive layers.

\textbf{Logit Difference.} To locate bias-sensitive layers, patching should be applied only after the attribute has been encoded in the model’s internal representations. We therefore assess whether the model’s representations already differentiate the context-aligned attribute from the biased one at each layer. Thus, we use logit-lens~\cite{elhage2021mathematical} to measure the logit difference (LD) between the secondary (context-aligned) and primary (biased) attributes at layer $l$. A larger positive difference indicates that the layer encodes evidence favoring the contextual attribute. Implementation details of this procedure are provided in Appendix~\ref{app:logit_diff}. We quantify this effect by averaging the LD over dataset $\mathcal{D}$.
\begin{equation*}
LD_l = \frac{1}{|\mathcal{D}|}\sum_{\mathcal{D}_i \in \mathcal{D}}
\left[
  (h^{(l)} W_o^{\top})_{a_{i,\mathrm{sec}}}
  -
  (h^{(l)} W_o^{\top})_{a_{i,\mathrm{pri}}}
\right]. 
\end{equation*}
\textbf{Gradient Similarity Alignment.} If patching is applied at layers close to the output, the model may have already committed to empirical priors, making bias difficult to override~\cite{lepori-etal-2025-racing}. To locate bias-sensitive layers where patched information still affects model decisions, we measure the alignment between the output gradient and the attribute direction, which indicates whether the model relies on biased or contextual information.

% However, if the layer on which we performed patching is too close to the last layer, the model may have already collapsed onto its empirical priors, making the patched signal difficult to detect~\cite{lepori-etal-2025-racing}. The information contained in the patched token $h^{(l)}(n)$ should be propagated and utilized by the deeper layers of the target model, rather than the model continuing to rely on its prior internal representations. If the model's output remains insensitive to $h^{(l)}(n)$, the gradient along this direction will be nearly zero, otherwise, the gradient will shift along the direction encoding this attribute. Thus, we use the alignment between gradient in the target model and the attribute direction in the patched token to measure how much the model is using the patched information. 

A linear probe is trained to find the attribute direction $w$. More details about this are shown in Appendix~\ref{app:linear_probing}. Then we patch hidden representation $h^{(l)}(n)$ into target prompt $T$, the probability to generate correct color $c$ is $p(c\mid T^l)$. Let the gradient of the patched hidden representation be $g_i$, then the gradient similarity alignment (GSA) score for layer $l$ is defined as the dataset-averaged absolute cosine similarity between the gradient direction and the attribute direction:
\[
GSA_l
=
\frac{1}{|\mathcal{D}|}
\sum_{\mathcal{D}_i \in \mathcal{D}}
\left|
\frac{g_i^{\top} w}{\|g_i\|_2 \, \|w\|_2}
\right|,
\quad
g_i = \frac{\partial \log p(c\mid T^l)}{\partial h^{(l)}(n)}.
\]

\textbf{Layer Selection.}
After analyzing the individual effectiveness of LD and GSA, we integrate the two metrics to compute a final score for each layer. To justify their combination, we analyze the statistical relationship between the two metrics. Using \causualmodel{} as an example, we compute the Spearman correlation coefficient~\cite{spearman1961proof} between LD and GSA, obtaining $\rho=-0.147$ with $p<$ 0.001, which indicates a weak correlation. To further characterize their relationship, we fit an isotonic regression~\cite{miles1959complete}, shown as the solid blue curve in Figure~\ref{fig:correlation_Llama3.2_1b}. The deviation from this idealized curve further suggests that LD and GSA are complementary, supporting their joint use in layer selection. The definitions of Spearman correlation and isotonic regression are provided in Appendix~\ref{app:spearman} and~\ref{app:Isotonic}. The correlation analysis for additional models is reported in Appendix~\ref{app:correlation}.
After normalizing $LD_l$ and $GSA_l$ to the $[ 0, 1]$ range, we combine them into a single layer-selection objective score and choose the optimal patching layer as
\[
l^{\star}
=
\arg\max_{\,l \in \mathcal{L}}
\Bigl(
w\,LD_l + (1-w)\,GSA_l
\Bigr),
\]
where $w$ controls the relative importance of representation-level encoding and gradient-level utilization.
We set $w=0.8$ for all models, which yields the best empirical performance; the selection of $w$ is determined via grid search, with results reported in Appendix~\ref{app:parameter}, and the layer scores and best layers for four models are shown in Figure~\ref{fig:best_layer}.

\begin{figure}[t]
    \centering
    \includegraphics[width=0.85\linewidth]{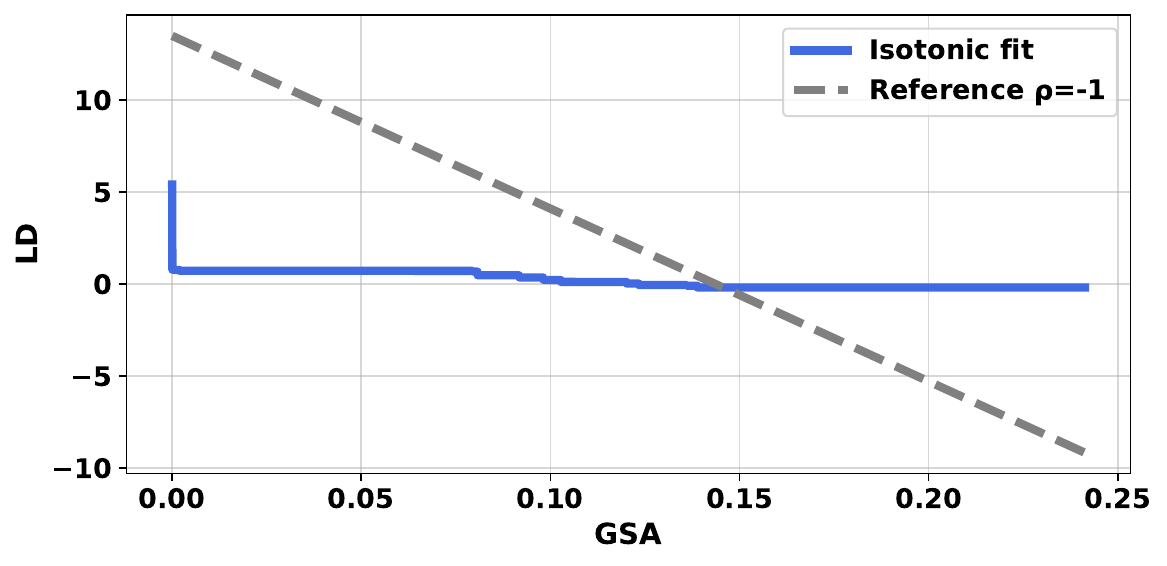}
    %\vspace{-6pt}
    \caption{Isotonic Regression of GSA and LD on \causualmodel{}
    }
    %\vspace{-8pt}
    \label{fig:correlation_Llama3.2_1b}
\end{figure}

\begin{figure}[t]
    \centering
    \includegraphics[width=0.85\linewidth]{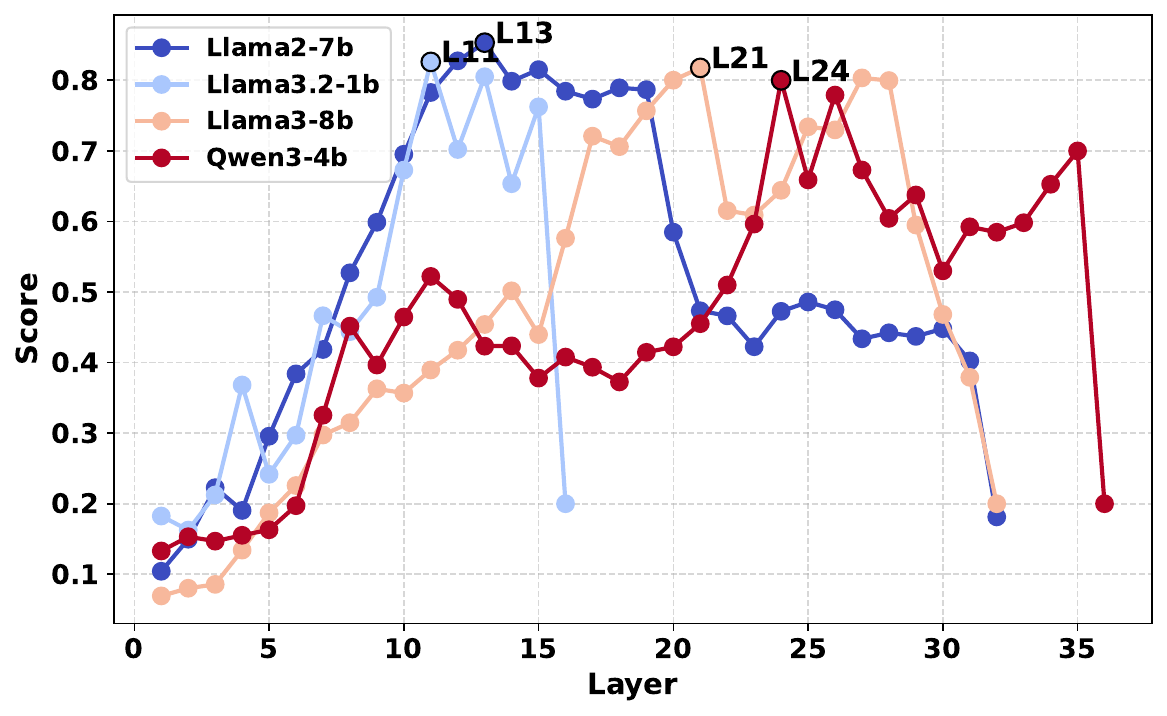}
    %\vspace{-6pt}
    \caption{The selected best layer of each model for patching. Layer index starts from 1 with $w$=0.8.}
    \label{fig:best_layer}
    %\vspace{-8pt}
\end{figure}

% \subsection{Can model bias Be Mitigated?}
% We first employ a prompt-guidance intervention method to explore whether the model bias can be controlled. We designed the following 3 prompts and added them as a prefix of target prompt:

% \begin{tcolorbox}
% \textbf{Prompt1}: Do not use outside or empirical knowledge, directly answer. \textcolor{blue}{\textless target prompt \textgreater}.

% \textbf{Prompt2}: Use only internal evidence from the input, exclude background knowledge. \textcolor{blue}{\textless target prompt \textgreater}.

% \textbf{Prompt3}: Answer strictly from the given data, avoid any external reasoning. \textcolor{blue}{\textless target prompt \textgreater}.
% \end{tcolorbox}

% As shown in Figure~\ref{fig:prompt_result}, prompts-guidance cases generally outperform the baseline, although in Qwen3-4b, the guidance of \textit{prompt2} and \textit{prompt3} has a slightly lower accuracy. These result indicates that prompts can steer the model to inference based on the information encoded in patched hidden representations, instead of relying on empirical bias and knowledge. This tendency means that model bias can be mitigated by inference-time-steering.

% \begin{figure}[t]
%     \centering
%     \includegraphics[width=\linewidth]{figure/prompt.png}
%     \caption{The result of prompt-guidance of each model.
%     }
%     \label{fig:prompt_result}
% \end{figure}

\subsection{\methodname{}}

\begin{figure}[t]
    \centering
    \includegraphics[width=\linewidth,trim=40 10 120 40,
        clip]{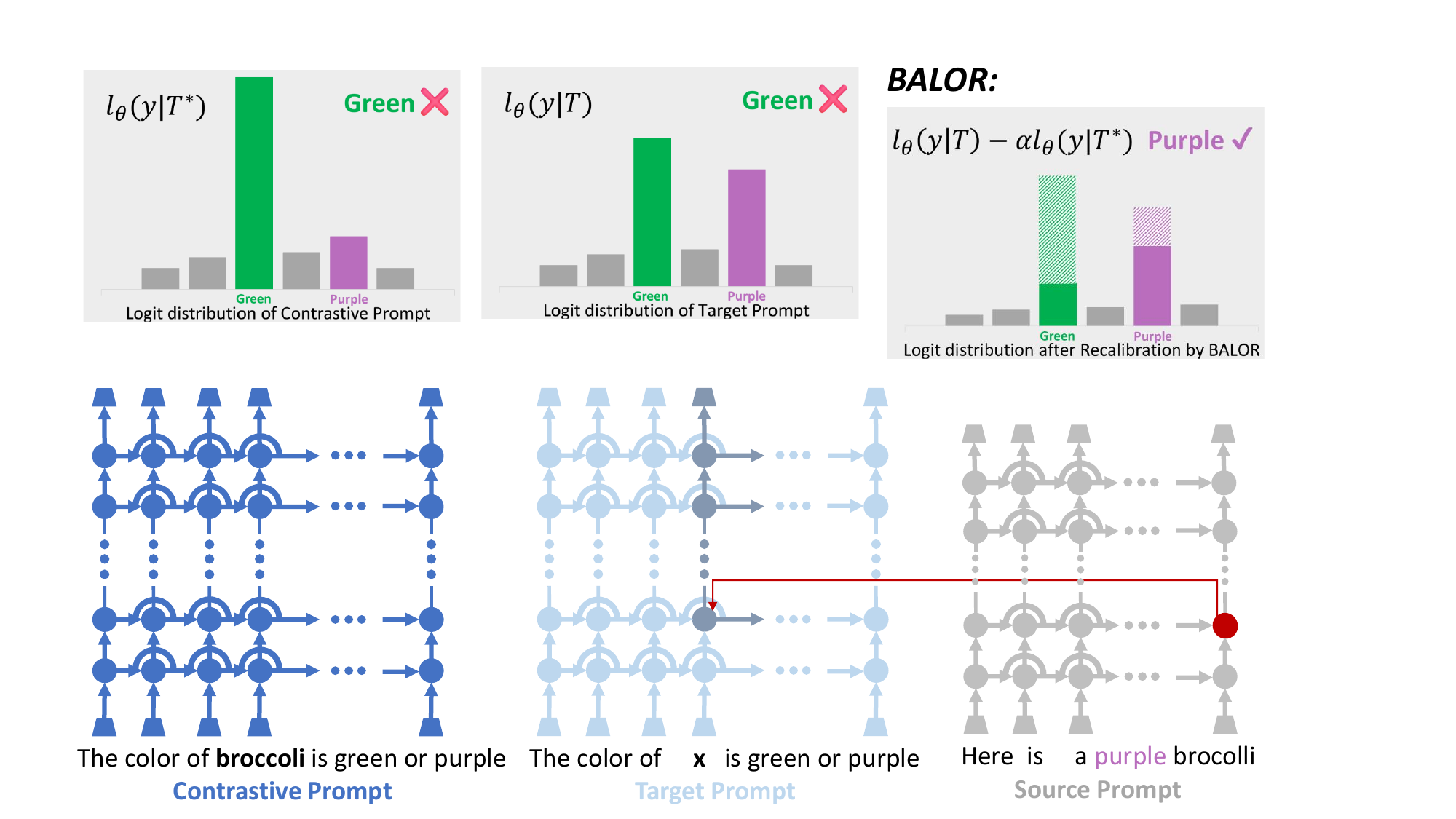}
    %\vspace{-6pt}
    \caption{An overview of BALOR method. We use the same example with Figure~\ref{fig:introexample}, while green is a biased attribute value and purple is a correct attribute value. With the recalibration approach between the logit distributions of contrastive prompt and target prompt in BALOR process, the LLM provides a correct response.}
    \label{fig:balor_method}
    %\vspace{-8pt}
\end{figure}

We proposed BALOR, a decoding method through logit recalibration. 
%A ``calibration'' step is applied to the models' output probabilities, which is adopted from the concept of contrastive decoding. 
In the BALOR method, we build a contrastive prompt pair $\langle T, T^* \rangle$. $T$ is the target prompt following the designation of~\citet{ghandeharioun2024patchscopesunifyingframeworkinspecting}, enabling an expressive decoding of a wide range of features. $T^*$ is the contrastive prompt, which replaces the placeholder token in $T$ with the original token in the source prompt that needs to be explained. Thus, the token in the contrastive prompt provided no contextual information, and the model can only explain the token with empirical knowledge and model bias. From $T^*$, we can obtain logits $l^*:=l_\theta(y\mid T^*)$ that only contain model priors. To emphasize the contribution of patched information, we aim to mitigate the undesired model bias highlighted by the contrastive prompt and allow the model to generate a response based on the remaining information when provided with the injected hidden representations. Thus, we recalibrate the logit distribution:
\[
p_{\text{balor}}(y\mid T)=\sigma\left[ (1+\alpha)l_\theta(y\mid T)-\alpha l_\theta(y\mid T^*)\right], 
\]
 $\sigma$ is the softmax function, and $\alpha$ is a hyperparameter that denotes the amplification level of the recalibration method.

% The contrastive pairs are:
% \begin{tcolorbox}
%     \textbf{Target Prompt ($T$) :}The \{task\} of \textcolor{blue}{\{$x$\} }is \{$a_{pri}$\} or \{$a_{sec}$\}?

%     \textbf{Contrastive Prompt ($T^*$) :} The \{task\} of \textcolor{blue}{\{noun\} }is \{$a_{ pri}$\} or \{$a_{sec}$\}?
% \end{tcolorbox}
% For the target prompt $T$, the placeholder $x$  will be replaced by the hidden representations of $noun$ with attribute information encoded, whereas no patching is applied to the contrastive prompt $T^*$. Since no extra information is provided, the model can only inference on empirical bias and knowledge. From $T^*$, we can obtain logits $l^*:=l_\theta(y\mid T^*)$ that only contain model priors. To emphasize the contribution of patched information, we aim to mitigate the undesired model bias highlighted by the contrastive prompt and allow the model to generate a response based on the remaining information when provided with the injected hidden representations. Thus, we recalibrate the logit distribution:
% \[
% p_{\text{balor}}(y\mid T)=\sigma\left[ (1+\alpha)l_\theta(y\mid T)-\alpha l_\theta(y\mid T^*)\right], 
% \]
%  $\sigma$ is the softmax function, and $\alpha$ is a hyperparameter that denotes the amplification level of the recalibration method. 

Figure~\ref{fig:balor_method} provides an intuitive illustration of how BALOR operates through logit recalibration with an example.
As shown in the left panel, when a contrastive prompt $T^*, \textit{``The color of broccoli is green or purple?''}$ is used without any contextual information, and the model exhibits a strong prior preference for the biased attribute value (\textit{green}). This preference reflects the model’s internal bias induced by imbalanced linguistic patterns. The middle panel shows that under the target prompt $T$, \textit{``The color of X is green or purple?''}, the contextual information is encoded in the patched hidden representation. Although the context-aligned attribute value (\emph{purple}) receives a higher logit than that in the contrastive prompt, its logit remains lower than that of the biased attribute value (\emph{green}). As a result, the response is still unfaithful. BALOR addresses this issue by isolating the bias component encoded in the unpatched prompt. As illustrated in the right panel, BALOR subtracts a scaled bias logit $l_\theta(y\mid T^*)$ from the target logit $l_\theta(y\mid T)$, yielding a recalibrated logit distribution. Consequently, the biased preference is suppressed, and the context-aligned attribute value (\emph{purple}) is amplified, as visualized by the dashed regions in the right panel. This logit-level adjustment effectively realigns the model’s decision with the injected contextual information without modifying model parameters or internal representations.

% We then provide a theoretical characterization of BALOR in the log-odds space. The following Theorem~\ref{thm:balor_logodds} explains why BALOR can consistently steer model predictions toward context-aligned tokens. It shows that as long as a candidate token is more supported by the patched target evidence than by the contrastive prior, BALOR will monotonically increase its relative preference as the scaling factor $\alpha$ grows. In the log-odds space, this effect is linear in $\alpha$, which guarantees that sufficiently large values of $\alpha$ will promote the context-consistent token to have the highest probability among competing candidates. The proof of this theorem is in Appendix~\ref{app:proof}. 

We then provide a theoretical characterization of BALOR in the log-odds space. Theorem~\ref{thm:balor_logodds} shows that if a candidate token is more supported by the patched token evidence than by the contrastive prior, BALOR monotonically increases its relative preference as the scaling factor $\alpha$ grows. This effect is linear in $\alpha$ in log-odds space, ensuring that sufficiently large $\alpha$ promotes the context-consistent token to the highest probability. The proof is given in Appendix~\ref{app:proof}.

\begin{theorem}[Log-odds contrastive amplification of BALOR]
\label{thm:balor_logodds}
Let \(l_\theta(y\mid T)\) and \(l_\theta(y\mid T^*)\) be the (pre-softmax) logits over vocabulary \(\mathcal{V}\).
Define BALOR as
\[
p_{\text{balor}}(y\mid T)
=\sigma\!\left((1+\alpha)\,l_\theta(y\mid T)-\alpha\,l_\theta(y\mid T^*)\right),\alpha\ge 0
\]
Then, for any two tokens \(y_1,y_2\in\mathcal{V}\), 
\begin{align*}
\log\frac{p_{\text{balor}}(y_1\mid T)}{p_{\text{balor}}(y_2\mid T)}
&=
\log\frac{p_\theta(y_1\mid T)}{p_\theta(y_2\mid T)}
\;+\;\\
&\quad
\alpha\left(
\log\frac{p_\theta(y_1\mid T)}{p_\theta(y_2\mid T)}
-
\log\frac{p_\theta(y_1\mid T^*)}{p_\theta(y_2\mid T^*)}
\right),
\end{align*}
where

\(p_\theta(\cdot\mid T)=\sigma(l_\theta(\cdot\mid T))\) and \(p_\theta(\cdot\mid T^*)=\sigma(l_\theta(\cdot\mid T^*))\).
Equivalently,
\begin{align*}
\log\frac{p_{\text{balor}}(y_1\mid T)}{p_{\text{balor}}(y_2\mid T)}
&=
(1+\alpha)\!\left(l_\theta(y_1\mid T)-l_\theta(y_2\mid T)\right)\\
&\quad -\alpha\!\left(l_\theta(y_1\mid T^*)-l_\theta(y_2\mid T^*)\right).
\end{align*}

In particular, if \(y_1\) is \emph{more supported by the patched target evidence than by the contrastive prior}, $
\log\frac{p_\theta(y_1\mid T)}{p_\theta(y_2\mid T)}
>
\log\frac{p_\theta(y_1\mid T^*)}{p_\theta(y_2\mid T^*)}$,
then the BALOR preference for \(y_1\) over \(y_2\) increases monotonically with \(\alpha\).
\end{theorem}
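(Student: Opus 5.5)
The plan is a direct computation from the definition of the softmax. The key fact is that for any logit vector $z$, the log-ratio of softmax probabilities is a difference of logits, $\log\frac{\sigma(z)_{y_1}}{\sigma(z)_{y_2}} = z_{y_1}-z_{y_2}$, because the normalizing constant $\sum_{y\in\mathcal V} e^{z_y}$ cancels. I will apply this three times: to the BALOR logit vector $z^{\text{balor}} = (1+\alpha)\,l_\theta(\cdot\mid T)-\alpha\,l_\theta(\cdot\mid T^*)$, to $l_\theta(\cdot\mid T)$, and to $l_\theta(\cdot\mid T^*)$.

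First, the BALOR log-odds equal $z^{\text{balor}}_{y_1}-z^{\text{balor}}_{y_2}$. Expanding linearly gives the second (``equivalently'') form:
\[
(1+\alpha)\bigl(l_\theta(y_1\mid T)-l_\theta(y_2\mid T)\bigr)-\alpha\bigl(l_\theta(y_1\mid T^*)-l_\theta(y_2\mid T^*)\bigr).
\]
Second, I rewrite $(1+\alpha)\Delta_T-\alpha\Delta_{T^*}$ as $\Delta_T+\alpha(\Delta_T-\Delta_{T^*})$, where $\Delta_T$ and $\Delta_{T^*}$ are the two logit differences. Substituting $\Delta_T=\log\frac{p_\theta(y_1\mid T)}{p_\theta(y_2\mid T)}$ and $\Delta_{T^*}=\log\frac{p_\theta(y_1\mid T^*)}{p_\theta(y_2\mid T^*)}$, again by the cancellation identity, yields the first displayed form.

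Third, for monotonicity, the BALOR log-odds is an affine function of $\alpha$ with slope $\Delta_T-\Delta_{T^*}$. This slope is strictly positive exactly under the stated hypothesis, so the log-odds is strictly increasing in $\alpha\ge 0$. The probability ratio $p_{\text{balor}}(y_1\mid T)/p_{\text{balor}}(y_2\mid T)$ is the exponential of the log-odds, so it is strictly increasing as well.

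There is no real obstacle. The only point needing care is the observation that softmax is invariant to additive constants, so the partition functions, which differ between $T$, $T^*$ and the BALOR combination, all cancel in pairwise ratios. All softmax probabilities are strictly positive, so every logarithm is well defined.
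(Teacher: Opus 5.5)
Your proposal is correct and follows essentially the same route as the paper's proof: you cancel the partition function to get the logit-difference form, convert it to the probability log-odds form via the softmax identity, and read off monotonicity from the affine dependence on $\alpha$ with slope $\log\frac{p_\theta(y_1\mid T)}{p_\theta(y_2\mid T)}-\log\frac{p_\theta(y_1\mid T^*)}{p_\theta(y_2\mid T^*)}>0$. Your added remarks on strict monotonicity and on the logarithms being well defined are harmless refinements.
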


While Figure~\ref{fig:balor_method} depicts a single decoding step, BALOR is applied iteratively during autoregressive next-token prediction with the target and contrastive prompts decoding in parallel at each step. There are two modes of \methodname{}, \textbf{S}hared and \textbf{D}ivided. In mode S, the recalibrated distribution $p_{\text{balor}}(y \mid T)$ is used to sample the next token for both the target prompt and the contrastive prompt, keeping their generated sequences synchronized.
In mode D, $p_{\text{balor}}(y\mid T)$ is applied only to the target prompt for next-token prediction, while the contrastive prompt continues decoding with its original logits. This design allows BALOR to flexibly control whether bias correction is enforced jointly across both prompts or applied solely to the target prompt during generation.

%\di{your method havily depends on your dataset, how to generalize your method to all prompts?}

\section{Experiments}
This section introduces the experimental setup in Section~\ref{sec:main_expsetting}, and then discusses BALOR's overall performance in Section~\ref{sec:main_result}, and the ablation study in Section~\ref{sec:ablation_study}.

\subsection{Settings}\label{sec:main_expsetting}
\textbf{Datasets and Models} We use the biased part of the dataset across four tasks introduced in Section~\ref{sec:dataset}, fixing the sampling temperature to zero as the main result. In the ablation study, we use the biased part of the color task. In the sampling temperature ablation, we set Top-k=$50$ and Top-p=$0.9$. The models used in all experiments are those of Section~\ref{sec:casual_analysis}.

\textbf{Baselines} We compared \methodname{} with six baselines: Vanilla, which means the original Patchscopes without any debiasing; Closed-Book (CB), Internal-Evidence (IE), and Data-Bound (DB), three prompt-guidance approaches to guide LLMs only use contextual information~\cite{amplayo-etal-2023-query}; SFT~\cite{rafailov2023direct} and FairSteer~\cite{li2025fairsteerinferencetimedebiasing}. The implementation details of these baselines are provided in Appendix~\ref{app:baselines}.

\textbf{Prompts under Zero-shot and Few-shot Settings}
In both settings, the source prompt $S$ is \texttt{Here is an \{a$_{i, sec}$\} \{noun$_i$\}.} The target prompt $T$ for zero-shot setting is \texttt{The \{task\} of x is \{a$_{i, pri}$\} or \{a$_{i, sec}$\}?} The target prompt $T$ for $n$-shot setting is \texttt{The \{task\} of noun$_1$ is \{a$_{1, sec}$\}... the \{task\} of noun$_n$ is \{a$_{n, sec}$\}, the \{task\} of x is}, where $1...n$ are are distinct from $i$. Contrastive prompts $T^*$ for both settings are replaced \texttt{x} with \texttt{noun$_i$}. For the gender task, target prompts are modified to bypass gender alignment, details shown in Appendix~\ref{app:biased_dataset}.

\subsection{Main result analysis}\label{sec:main_result}
The results of the baseline methods under zero-shot and few-shot settings on biased datasets are reported in Table~\ref{tab:main_result_zeroshot} and Table~\ref{tab:main_result_fewshot}. Across all four biased tasks, \methodname{} achieves the best overall performance.

Overall, zero-shot settings tend to yield higher SR than few-shot settings, which is caused by task formulation. Zero-shot settings are framed as binary-choice questions with a constrained output space, whereas few-shot settings require free-form generation conditioned on in-context examples, which substantially expands the space of possible attribute values. This effect is more pronounced for tasks with a large attribute value space, such as color and culture, where open-ended generation introduces greater linguistic variability. In contrast, for tasks with relatively constrained attribute spaces, such as gender and age, the performance gap between zero-shot and few-shot settings is less evident.

The three prompt-based methods, CB, IE, and DB, can sometimes improve the SR compared to vanilla Patchscopes. In particular, IE achieves the highest SR on the gender task for Llama2-7b and Llama3-8b under few-shot settings. However, their performance is highly unstable, and they frequently underperform the vanilla baseline. Moreover, determining an optimal prompt is difficult, as different prompts perform better across different tasks, resulting in inconsistent outcomes. Similarly, SFT methods do not consistently outperform vanilla Patchscopes. Their performance degrades substantially when the biased dataset used for fine-tuning conflicts with the evaluation data, often leading to much lower SR than the vanilla baseline. This behavior highlights an inherent limitation of SFT debiasing approaches. FairSteer exhibits a similar limitation, which becomes more pronounced when the debiasing direction is inaccurate. In the gender task under zero-shot settings, FairSteer performs worse than the vanilla baseline across all four models. 

In contrast, \methodname{} performs debiasing at the decoding stage by modifying the logit distribution, thereby avoiding the limitations of prompt-based and fine-tuning-based methods. Under both zero-shot and few-shot settings, \methodname{} consistently outperforms vanilla Patchscopes and achieves better performance than all six baselines in nearly all cases. These results demonstrate that \methodname{} attains both strong performance and high consistency, validating its effectiveness as a more faithful mechanism for hidden representation explanation.

However, we observe that mode D is less stable than mode S. While mode S consistently outperforms all six baselines, mode D occasionally yields a lower SR. This instability arises because the target and contrastive prompts in mode D follow different next-token prediction strategies, causing the two input sequences to diverge progressively. As a result, the resulting logit distributions reflect not only differences in the hidden representations, but also discrepancies introduced by newly generated tokens. This divergence reduces the stability of Mode D, although it can sometimes lead to higher SR.

% However, we also found that mode D is not as stable as mode S, while mode S consistently outperforms all six baselines while mode D sometimes get a lower SR.
% This is because the target and contrastive prompt use different next token prediction strategies, the two input sequences diverge more and more, causing their new logit distributions to reflect not only differences in the hidden representations we aim to explain, but also differences introduced by the newly generated tokens. This divergence reduces the stability of mode D, but sometimes helps it to achieve higher SR.

\begin{table*}[h!]
\centering
\resizebox{0.95\linewidth}{!}{
\small
\begin{tabular}{lc|cccccccc}
\toprule
Task & Model & Vanilla& CB&IE&DB &Fine-tuning & FairSteer & \methodname{} (S) & \methodname{} (D) \\
\midrule

\multirow{4}{*}{Color} 
& Llama2-7b    & 37.05\% & 48.70\% & \underline{49.09\%} & 47.59\% &40.36\%& 40.36\% & \textbf{49.39\%} & 49.39\% \\
& Llama3.2-1b  & 39.10\% & 43.17\% & 58.58\% & \underline{62.94\%} &21.80\%& 38.81\% & \textbf{71.66\%} & 48.84\% \\
& Llama3-8b    & 23.40\% & 32.40\% & 24.43\% & 24.32\% &\underline{43.48\%}& 32.29\% & \textbf{56.42\%} & 39.23\% \\
& Qwen3-4b     & 48.20\% & 50.79\% & 46.61\% & 46.21\% &51.70\%& 51.49\% & \underline{73.45\%} & \textbf{74.80\%} \\
\midrule

\multirow{4}{*}{Gender} 
& Llama2-7b & 24.06\%&36.04\% &\underline{41.70\%}&37.92\%&41.02\%& 22.64\% & \textbf{46.42\%} &43.70\%\\
& Llama3.2-1b & 3.33\%&46.67\%&9.99\%& 26.64\%&3.33\%&0.00\% &\underline{59.94\%} &\textbf{63.27\%}\\
& Llama3-8b & 0.00\%&50.00\%&50.00\%&47.56\% &32.93\%&0.00\% &\underline{54.39\%} &\textbf{51.71\%}\\
& Qwen3-4b & 34.78\%&2.17\%&8.70\%&2.17\% &48.58\%&31.13\% &\textbf{76.09\%} &\underline{58.26\%}\\
\midrule

\multirow{4}{*}{Culture} 
& Llama2-7b & 36.84\%&37.72\%&43.49\%& 42.11\% &38.60\%& 17.54\% & \textbf{54.39\%} & \underline{45.61\%}\\
& Llama3.2-1b & 35.87\%&29.35\%&35.87\%&39.13\%&4.35\% &45.65\%& \textbf{57.61\%} & \underline{40.43\%}\\
& Llama3-8b & 9.68\% &8.87\%&10.48\%&22.58\%&18.55\%&6.45\%& \textbf{39.52\%}  & \underline{37.90\%}\\
& Qwen3-4b & 45.69\% &43.96\%&43.73\%&47.41\%&54.03\%&50.09\%& \textbf{74.14\%} & \underline{70.16\%}\\
\midrule

\multirow{4}{*}{Age} 
& Llama2-7b & 50.00\%& 46.34\% &58.54\%&\underline{60.98\%}&58.54\%&43.90\% & \textbf{69.51\%} &67.07\% \\
& Llama3.2-1b & 50.00\% &50.00\%&47.22\%&50.00\%&47.78\%&41.46\%& \underline{50.00\%} & \textbf{91.67\%}\\
& Llama3-8b & 67.24\% &50.00\%&39.66\%&50.00\%&65.56\%&44.83\%& \underline{70.69\%} & \textbf{72.41\%}\\
& Qwen3-4b & 56.67\% &56.10\%&50.00\%&35.37\%&70.76\%&56.10\%& \underline{78.05\%} & \textbf{98.78\%}\\

\bottomrule
\end{tabular}}
\caption{Compare \methodname{} with baselines under zero-shot settings. \textbf{Bold} and \underline{underline} denote the highest and second-highest SR. }
\label{tab:main_result_zeroshot}
\end{table*}

% \methodname{} (D) is not as stable as mode S, since if the target and contrastive prompt use different next token prediction stragegy, he two input sequences diverge more and more, causing their new logit distributions to reflect not only differences in the hidden representations we aim to explain, but also differences introduced by the newly generated content. This divergence reduces the stability of the method. Nevertheless, it still outperforms the baseline across all four models.

\begin{table*}[h!]
\centering
\resizebox{0.95\linewidth}{!}{
\small
\begin{tabular}{lc|cccccccc}
\toprule
Task & Model & Vanilla& CB&IE&DB &Fine-tuning & FairSteer & \methodname{} (S) & \methodname{} (D) \\
\midrule

\multirow{4}{*}{Color} 
& Llama2-7b    & 5.42\% & 7.83\% & 3.01\% & 1.51\% &18.37\%& 4.82\% & \textbf{52.10\%} & \underline{48.19\%} \\
& Llama3.2-1b  & 14.24\% & 21.07\% & 8.14\% & 15.12\% &3.78\%& 19.77\% & \textbf{69.77\%} & \underline{65.99\%} \\
& Llama3-8b    & 10.97\% & 6.63\% &8.70\% & 8.60\% &41.72\%& 14.50\% & \textbf{47.62\%} & \underline{43.47\%} \\
& Qwen3-4b     & 18.41\% & 13.66\% & 14.44\% & 15.50\% &29.17\%& 22.09\% & \textbf{68.90\%} & \underline{67.54\%} \\
\midrule

\multirow{4}{*}{Gender} 
& Llama2-7b & 34.91\%&69.34\%& \textbf{82.08\%}& \underline{80.19\%}&36.23\%& 36.80\% & 37.74\% &40.57\%\\
& Llama3.2-1b & 6.67\%&16.67\%&10.00\%& 13.33\%&13.33\%&10.00\% & \underline{26.67\%} & \textbf{36.67\%}\\
& Llama3-8b & 12.20\%&28.05\%& \textbf{34.15\%}&17.07\% &1.22\%&21.95\% & \underline{29.27\%} &28.05\%\\
& Qwen3-4b & 26.08\%&17.39\%&15.21\%&21.74\% &17.39\%&21.74\% & \textbf{41.30\%} & \underline{39.13\%}\\
\midrule

\multirow{4}{*}{Culture} 
& Llama2-7b & 10.53\%&14.03\%&9.65\%& 14.91\% &8.77\%& 8.77\% & \underline{55.26\%} & \textbf{59.64\%}\\
& Llama3.2-1b & 0.00\%& 2.17\%& 9.78\%&  10.87\%&   8.70\% &13.04\%& \textbf{40.65\%} & \underline{40.21\%}\\
& Llama3-8b & 14.52\% & 4.03\%& 8.06\%& 4.03\%& 1.61\%& 12.90\%&  \textbf{37.09\%}&  \underline{35.32\%}\\
& Qwen3-4b & 8.62\%& 4.31\%& 9.48\%& 8.62\%& 27.59\%& 12.07\%& \textbf{62.07\%}& \underline{57.76\%}\\
\midrule

\multirow{4}{*}{Age} 
& Llama2-7b & 56.10\%&  42.68\% & 48.78\%& 39.02\%& 59.76\%&51.22\% &  \textbf{80.48\%} & \underline{74.39\%} \\
& Llama3.2-1b & 0.00\% & 19.44\%& 13.89\%& 2.78\%& 16.67\%& \underline{55.56\%}&  \textbf{63.89\%} &  58.33\%\\
& Llama3-8b & 17.24\% & 31.03\%& 27.58\%& 34.48\%&10.34\%& \underline{39.93\%}&  \textbf{89.65\%}  &  84.48\% \\
& Qwen3-4b &  39.02\% & 40.24\%& 51.21\%& 60.97\%&41.46\%&46.34\%&  \textbf{86.58\%} & \underline{79.26\%}\\

\bottomrule
\end{tabular}}
\caption{Compare \methodname{} with baselines under few-shot settings. \textbf{Bold} and \underline{underline} denote the highest and second-highest SR. }
\label{tab:main_result_fewshot}
\end{table*}
\subsection{Ablation Study}\label{sec:ablation_study}
\textbf{Hyperparemeter $\alpha$.}
%We evaluate the effect of the hyperparameter $\alpha$ on the performance of \methodname{}. 
As shown in Figure~\ref{fig:ablation_alpha}, \methodname{} consistently yields positive SR gains over the vanilla Patchscopes baseline across all four models for a wide range of $\alpha$ values. The Shared mode achieves stable improvements throughout $\alpha \in [0.4, 4.0]$, achieving substantial peak gains of 12–33\%, while maintaining positive performance as $\alpha$ increases. The Divided mode exhibits more variability and smaller gains. Although the relative gains are negative on $\alpha=2.8$ for Llama2-7b and $\alpha \ge 0.8$, but still improve over the vanilla baseline for all other $\alpha$ settings on four models. While mode S is more stable, we recommend using this Shared mode in the application. 

%for example, +33.2\% for Llama3-8b and +25.2\% for Qwen3-4b at $\alpha=0.8$.
%It also surpasses prompt-guidance methods in most settings, indicating that direct latent steering provides more stable and interpretable control. Performance remains robust for $\alpha \in [0.4, 4.0]$ with step size 0.4, suggesting low sensitivity to recalibration strength.
%Moreover, \methodname{} also outperforms prompt-guidance methods in most configurations, indicating that direct latent steering provides more stable and interpretable control than prompt-guidance interventions. The performance remains robust across $\alpha$ values from 0.4 to 4.0 with step width 0.4, suggesting that \methodname{} (S) is not overly sensitive to the recalibration strength. 

\begin{figure}[h!]
    \centering
    \includegraphics[width=0.95\linewidth,
    trim=0 10 0 5,
        clip]{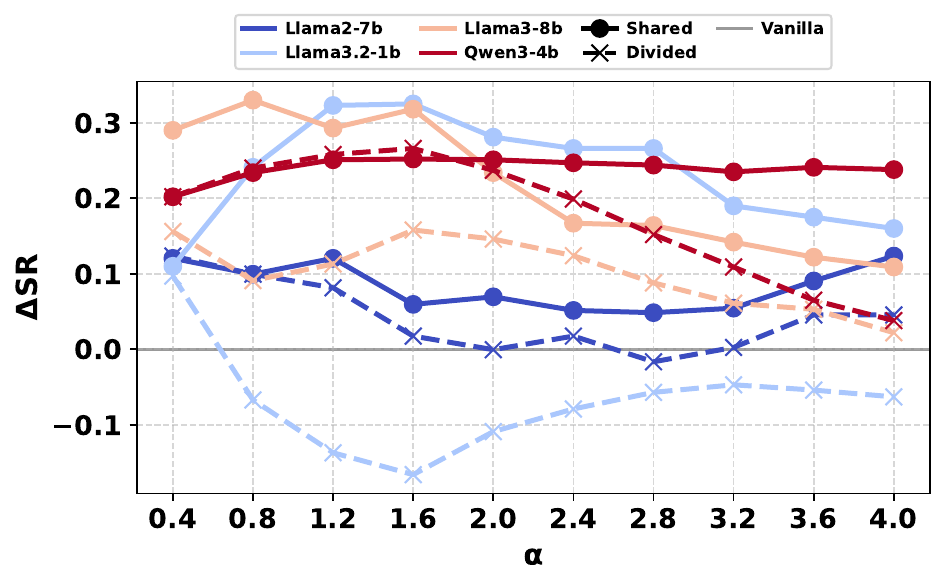}
    \caption{Relation SR gain by hyberparameter $\alpha$ of four models compared with vanilla Patchscopes.
    }
    \label{fig:ablation_alpha}
\end{figure}

\textbf{Sampling Temperature.} We further evaluate \methodname{} under varying sampling temperatures, which control inference randomness, ranging from 0.0 to 1.6 for all LLMs (Figure~\ref{fig:temperature}). Sampling temperature has little impact on the SR of mode S, whereas mode D exhibits a slight upward trend. This indicates that mode S remains stable across temperatures, while mode D benefits from higher temperatures, as divergence between prompt pairs means the optimal token may not always have the highest post-recalibration probability. For statistical analysis, we apply the Kruskal-Wallis test~\cite{kruskal1952use} to the SR distributions. The definition and result of the Kruskal-Wallis test are provided in Appendix~\ref{app:kruskal}.
\begin{figure}[h!]
    \centering
    \includegraphics[width=\linewidth,
    trim=0 0 0 20,
        clip]{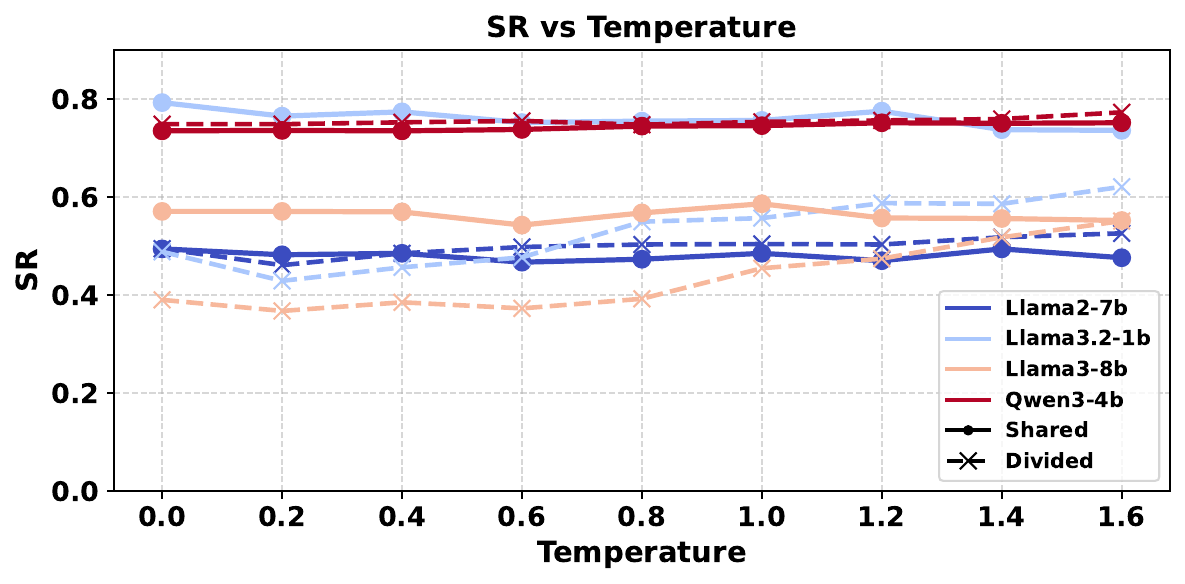}
    \caption{SR by sampling temperature of four models for \methodname{}.
    }
    \label{fig:temperature}
\end{figure}

\section{Related Work}
\textbf{LLM Interpretability.} Since LLMs are widely used in a variety of areas, it is important to ensure their explainability. Traditional interpretability methods, such as probing classifiers ~\citep{alain2018understandingintermediatelayersusing, belinkov-2022-probing,DBLP:journals/corr/abs-2508-10599,DBLP:journals/corr/abs-2506-15617,yang2025exploringpersonalitytraitsllms}, aim to measure whether a particular linguistic or semantic property can be linearly predicted from intermediate activations.
%Although these methods provide quantitative insight into representational content, they often fail to explain whether the information is faithfully used during inference.
More recent studies have explored interpretability by translating hidden representations into natural language.~\citet{belrose2025elicitinglatentpredictionstransformers} use tuned lens to reconstruct the semantics of hidden vectors by projecting them back into interpretable token spaces. Building upon this direction, the Patchscopes framework~\cite{ghandeharioun2024patchscopesunifyingframeworkinspecting} introduces a modular approach to interpreting hidden representations by decoding them through a model’s generative process.

\textbf{Debiasing LLMs. }The LLMs exhibit modality preferences and inherent biases~\cite{shrawgi-etal-2024-uncovering, gallegos-etal-2024-bias, zhang2025modalitiesconflictunimodalreasoning}, motivating the development of debiasing methods. Debiasing methods can be grouped into two main categories: 
(1) Fine-tuning-based methods, which mitigate bias in LLMs by retraining the model on a well-constructed dataset~\cite{chen2023fastmodeldebiasmachine}. The retraining methods such as contrastive learning~\cite{zhou-etal-2022-debiased, dong-etal-2023-co2pt, wu2025fisher} and reinforcement learning~\cite{allam-2024-biasdpo, qureshi2024refine} and are usually combined with a parameter-efficient-fine-tuning technique such as LoRA~\cite{hu2022lora} and BitFit~\cite{zaken2022bitfit}. 
(2) Inference-time intervention methods intervene in the model at the inference stage. Prompt-based intervention methods use carefully designed prompts to lead LLMs toward unbiased outputs during generation~\cite{dong-etal-2023-co2pt, oba-etal-2024-contextual, gallegos2024selfdebiasinglargelanguagemodels, li2025fairsteerinferencetimedebiasing}. Other methods, such as Fairsteer~\cite{li2025fairsteerinferencetimedebiasing}, learn a steering vector and then use it to adjust biased activation during inference.

\section{Conclusion}
In this work, we show that imbalanced linguistic patterns will lead to model bias in LLMs, and this kind of model bias makes the interpretation tool Patchscopes unfaithful. We built a dataset to investigate this, and through controlled experiments, we show that the model bias induced by imbalanced linguistic patterns overrides contextual information, which leads to unfaithfulness. Building on this insight, we propose \methodname{}, a contrastive logit-recalibration strategy to mitigate this phenomenon, enhancing the faithfulness of Patchscopes.

\section*{Impact Statement}
This paper aims to improve the reliability of LLM interpretability by analyzing and mitigating unfaithfulness in the Patchscopes framework. We show that imbalanced linguistic patterns can induce model bias that distorts explanation faithfulness, and we propose an inference-time decoding strategy to reduce this effect without modifying model parameters. By improving the faithfulness of Patchscopes, our approach may help researchers better understand how LLMs encode and use contextual information, which is important for trustworthy deployment.
We do not foresee negative societal consequences arising directly from this work. However, as with all interpretability tools, explanations produced using our method should not be treated as definitive evidence of model reasoning, and they should be used alongside complementary evaluation and auditing techniques. Overall, we believe the benefits of improving explanation faithfulness outweigh potential risks.

\bibliography{ref}
\bibliographystyle{icml2026}

\appendix

\section{Casual Analysis on Other Models} \label{app:casual}

In this section, we provide the causal analysis results on all four models we used. Across all models, the logistic regression results show a consistent positive association between the frequency gap $\Delta f$(defined in Section~\ref{sec:casual_analysis}) and the likelihood of color bias, As summarized in Table \ref{table:casual_analysis}, one standard deviation increase in $\Delta f$ leads to a substantial rise in the odds of exhibiting bias, with odds ratios (OR) ranging from 5.726 (Llama2-7B) to 9.338 (Llama3.2-1B). The corresponding logit coefficients are all positive and statistically significant, confirming that training frequency imbalance meaningfully predicts model bias. Furthermore, the ROC-AUC values between 0.579 and 0.619 indicate moderate but consistent discriminative power across models.

\begin{table}[h!]
\centering
\resizebox{\linewidth}{!}{
\begin{tabular}{lccc}
\toprule
\textbf{Model} & \textbf{Metric} & \textbf{Mean Value} & \textbf{95\% CI} \\
\midrule
\multirow{3}{*}{Llama2-7b} 
 & Logit coeff (+1 SD) & 1.745 & (0.866, 2.624) \\
 & OR (+1 SD)          & 5.726 & (2.376, 13.796) \\
 & ROC-AUC             & 0.606 & (0.549, 0.659) \\
\midrule
\multirow{3}{*}{Llama3.2-1b} 
 & Logit coeff (+1 SD) & 2.234 & (1.487, 2.981) \\
 & OR (+1 SD)          & 9.338 & (4.425, 19.706) \\
 & ROC-AUC             & 0.619 & (0.580, 0.655) \\
\midrule
\multirow{3}{*}{Llama3-8b} 
 & Logit coeff (+1 SD) & 1.894 & (1.295, 2.493) \\
 & OR (+1 SD)          & 6.645 & (3.651, 12.097) \\
 & ROC-AUC             & 0.579 & (0.546, 0.609) \\
\midrule
\multirow{3}{*}{Qwen3-4b} 
 & Logit coeff (+1 SD) & 1.866 & (1.312, 2.420) \\
 & OR (+1 SD)          & 6.464 & (3.715, 11.247) \\
 & ROC-AUC             & 0.599 & (0.569, 0.629) \\
\bottomrule
\end{tabular}}
\caption{Repeated undersampling results for each model.}
\label{table:casual_analysis}
\end{table}

% \section{Details about source and target prompt settings}\label{app:prompt_detail}

% For simple implementation, the placeholder symbol $x$ can be repeated to match the token length of the noun. To mitigate option position bias, we additionally use a counterpart target prompt with the attribute options swapped and report performance averaged over both prompts.

\section{Details about Locating the Bias-sensitive Layer}
\subsection{Logit Difference}\label{app:logit_diff}
To verify that a given layer indeed reflects the model’s understanding of the input, we append an interrogative clause ``What \{attribute\} is \{noun\}?'' to the source prompt. This addition explicitly forces the model to process the attribute associated with the noun, thereby embedding the desired semantic information within the evolving hidden representations.

After injecting this clause, we use logit-lens to measure how the model internally differentiates the secondary attribute, which is shown less in linguistic pattern but provided in contextual information, from the primary attribute, which is the biased attribute from linguistic pattern at layer $l$. We examine the logit values difference of the secondary and primary attributes, where the logit value is computed by mapping the hidden representation at layer $l$ to the vocabulary space using the unembedding matrix $W_o$. This reflects how strongly the layer-level hidden representation favors the secondary attribute over the primary one. Intuitively, if a layer meaningfully encodes the color information carried by the patched representation, the logit associated with the secondary color should be higher than that of the primary color.

\subsection{Linear Probing}\label{app:linear_probing}
We train a multi-class linear probe that can classify the noun tokens into different colors on a general dataset. $W \in \mathbb{R}^{C\times d}$ is the probe's weight matrix for $C$ color classes when $d$ is the dimension of hidden representations. After training, each row vector $W_c$ corresponds to a learned attribute direction of color $c$~\cite{kim2018interpretability}. Thus, this attribute direction $w$ can be directly taken as $w=W_c\in \mathbb{R}^d$.

\subsection{Correlation Analysis}
\subsubsection{Spearman Correlation}\label{app:spearman}
The Spearman correlation coefficient $\rho$ is a non-parametric measure of rank correlation that evaluates the strength and direction of a monotonic relationship between two variables~\cite{spearman1961proof}.  
Given paired observations $(x_i, y_i)$ for $i = 1, \ldots, n$, we first convert them to ranks $R(x_i)$ and $R(y_i)$, and then compute:
\[
\rho = 
\frac{\sum_{i=1}^n (R(x_i) - \overline{R_x})(R(y_i) - \overline{R_y})}
{\sqrt{\sum_{i=1}^n (R(x_i) - \overline{R_x})^2} 
\sqrt{\sum_{i=1}^n (R(y_i) - \overline{R_y})^2}},
\]
where $\overline{R_x} = \frac{1}{n}\sum_{i=1}^{n}R(x_i)$ and $\overline{R_y} = \frac{1}{n}\sum_{i=1}^{n}R(y_i)$.  
Alternatively, when there are no tied ranks, it can be expressed as
\[
\rho = 1 - \frac{6\sum_{i=1}^{n} d_i^2}{n(n^2 - 1)}, \quad d_i = R(x_i) - R(y_i).
\]
The coefficient ranges from $-1$ (perfect negative correlation) to $+1$ (perfect positive correlation), 
with $\rho = 0$ indicating no monotonic association.

\subsubsection{Isotonic Regression}\label{app:Isotonic}
Isotonic regression~\cite{miles1959complete} is a non-parametric technique for fitting a monotonic (non-decreasing) function to a set of paired observations $(x_i, y_i)$. It estimates values $\hat{f_i}$ by solving
\[
\min_{f_1 \le f_2 \le \cdots \le f_n} \sum_{i=1}^{n} w_i (y_i - f_i)^2,
\]
subject to the ordering constraints $f_i \le f_{i+1}$, where $w_i$ are optional non-negative weights.  
This problem is efficiently solved using the Pool Adjacent Violators Algorithm (PAVA), which produces a piecewise-constant fit that enforces the desired monotonic relationship.

\subsubsection{Correlation Analysis Results}\label{app:correlation}
The results in Table \ref{tab:spearman_correlation} demonstrate that these two metrics, LD and GSA, do not have a monotonic relationship since $\rho$ is close to zero with a small $p$-value. We further characterize the correlation of LD and GSA with isotonic regression, and the fitted curves are shown in Figure \ref{fig:correlation_all}. These results further demonstrate that these two metrics are independent.
\begin{table}[t]
\centering

\small
\begin{tabular}{lcc}
\toprule
Model & Spearman $\rho$ & $p$-value \\
\midrule
Llama2-7B   & $0.010$  & $2.491\times10^{-1}$ \\
Llama3.2-1B & $-0.147$ & $4.852\times10^{-33}$ \\
Llama3-8B   & $-0.006$ & $5.023\times10^{-1}$ \\
Qwen3-4B    & $-0.040$ & $9.281\times10^{-6}$ \\
\bottomrule
\end{tabular}
\caption{Spearman correlation between Logit Difference and Gradient Similarity Alignment across models.}
\label{tab:spearman_correlation}
\end{table}

\begin{figure}[t]
    \centering
    \includegraphics[width=\linewidth]{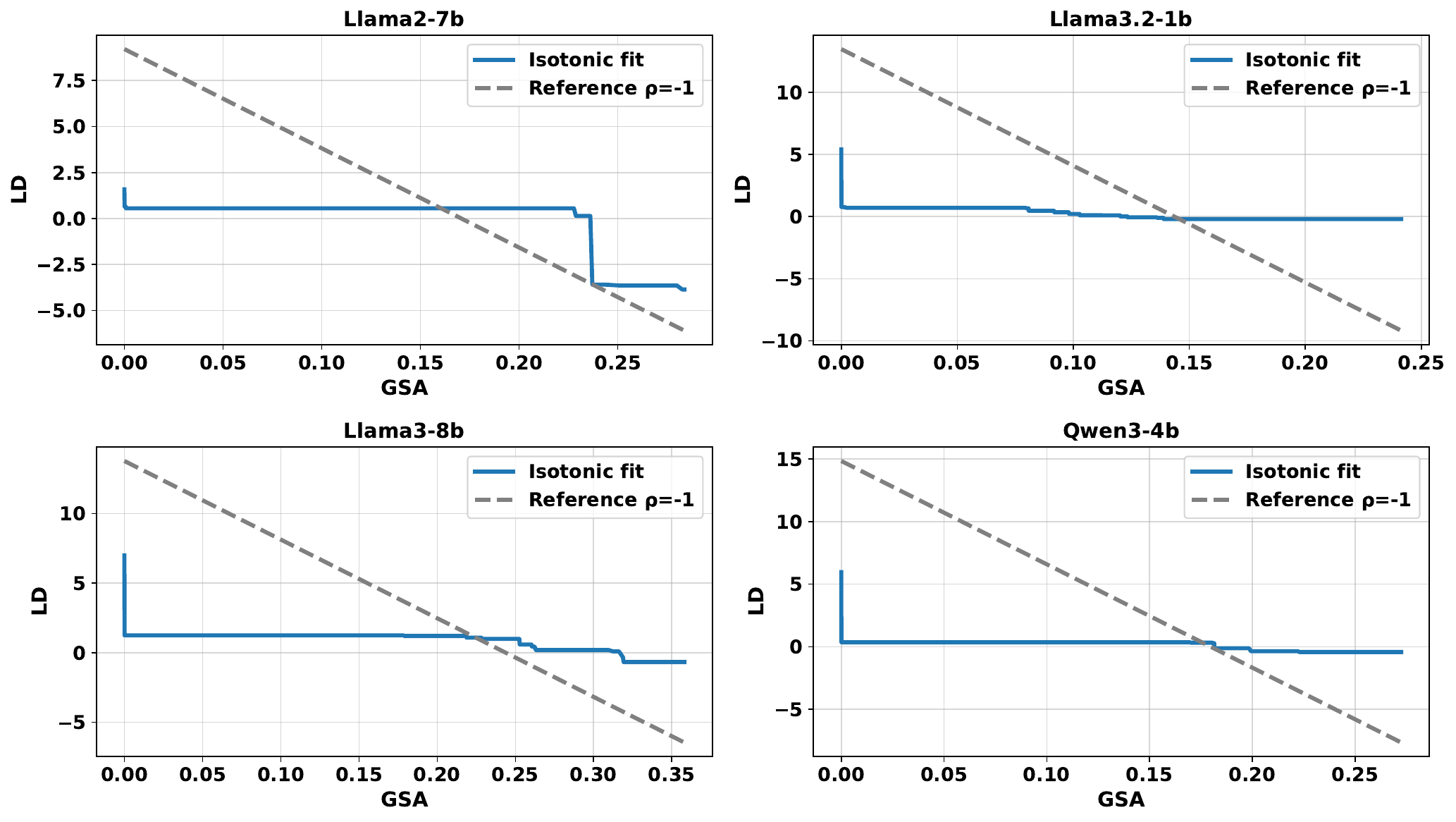}
    \caption{Isotonic regression results across models. The gray line is the reference curve while the blue line is the fitted curve.}
    \label{fig:correlation_all}
\end{figure}

\section{Proof of Theorem}\label{app:proof}
\textit{Proof.}

%By definition,
\(
p_{\text{balor}}(y\mid T)\propto
\exp\left((1+\alpha)l_\theta(y\mid T)-\alpha l_\theta(y\mid T^*)\right).
\)
Therefore, for any \(y_1,y_2\),
\begin{align*}
\log\frac{p_{\text{balor}}(y_1\mid T)}{p_{\text{balor}}(y_2\mid T)}
&=
(1+\alpha)\!\left(l_\theta(y_1\mid T)-l_\theta(y_2\mid T)\right)\\
&\quad -\alpha\!\left(l_\theta(y_1\mid T^*)-l_\theta(y_2\mid T^*)\right),
\end{align*}
which gives the second equality. For the first equality, note that softmax log-odds satisfy
\(
\log\frac{\sigma(l)(y_1)}{\sigma(l)(y_2)}=l(y_1)-l(y_2)
\),
so substituting \(p_\theta(\cdot\mid T)=\sigma(l_\theta(\cdot\mid T))\) and similarly for \(T^*\) yields the stated decomposition.
Finally, the monotonicity follows because the right-hand side is affine in \(\alpha\) with slope
\(
\log\frac{p_\theta(y_1\mid T)}{p_\theta(y_2\mid T)}-
\log\frac{p_\theta(y_1\mid T^*)}{p_\theta(y_2\mid T^*)}
\),
which is positive under the condition.
%\end{proof}

\section{Baselines}\label{app:baselines}
\subsection{Prompting}

% \subsection{Can model bias Be Mitigated?}
% We first employ a prompt-guidance intervention method to explore whether the model bias can be controlled. 
Prompt-guidance intervention approach is the easiest to implement way for model debiasing. We design three distinct guidance prompts that serve as prefix instructions to guide the model in generating non-biased responses based on provided contextual information only~\cite{amplayo-etal-2023-query}.

\begin{tcolorbox}
\textbf{Closed-Book}: Do not use outside or empirical knowledge, directly answer. \textcolor{blue}{\textless target prompt \textgreater}.

\textbf{Internal-Evidence}: Use only internal evidence from the input, exclude background knowledge. \textcolor{blue}{\textless target prompt \textgreater}.

\textbf{Data-Bound}: Answer strictly from the given data, avoid any external reasoning. \textcolor{blue}{\textless target prompt \textgreater}.
\end{tcolorbox}

% As shown in Figure~\ref{fig:prompt_result}, prompts-guidance cases generally outperform the baseline, although in Qwen3-4b, the guidance of \textit{prompt2} and \textit{prompt3} has a slightly lower accuracy. These result indicates that prompts can steer the model to inference based on the information encoded in patched hidden representations, instead of relying on empirical bias and knowledge. This tendency means that model bias can be mitigated by inference-time-steering.

% \begin{figure}[t]
%     \centering
%     \includegraphics[width=\linewidth]{figure/prompt.png}
%     \caption{The result of prompt-guidance of each model.
%     }
%     \label{fig:prompt_result}
% \end{figure}

\subsection{Fine-tuning}
In order to prevent the effect of model bias, fine-tuning is a well-known debiasing method. We use Direct Preference Optimization (DPO)~\cite{rafailov2023direct} combined with Low-Rank adaptation (LoRA)~\cite{hu2022lora} here to perform lightweight preference fine-tuning on Llama2-7b, Llama3.2-1b, Llama3-8b, and Qwen3-4b. To reduce model bias, we adopt 1,507
pairs of biased and unbiased data from~\cite{nangia-etal-2020-crows}, of which 95\% are used for training and 5\% are used for validation. This encourages the model to prefer more fairness and faithful responses.

\subsection{FairSteer}
FairSteer is a vector-steering intervention method. Following \cite{li2025fairsteerinferencetimedebiasing}, we first trained a steering vector to capture the overall direction from biased case to unbiased case. During the inference stage, we modify the midlayer activation before the patched layer along the steering vector's direction by adding the steering vector on the activation.

\section{Ablation Study of Hyper parameter $w$}\label{app:parameter}

The ablation study of hyper parameter $w$ is shown in Figure \ref{fig:hyper_parameter_w_all}. The best layer under each $w$ is highlighted in the figure. Across all $w$ values, there are only 2 layers as candidates for Llama2-7b, and 3 layers as candidates for Llama3.2-1b, Llama3-8b and Qwen3-4b. As stated in \cite{lepori-etal-2025-racing} and \cite{ghandeharioun2024patchscopesunifyingframeworkinspecting}, biases or misinterpretations often arise in the middle layers of transformer models. Thus, we set $w=0.8$ for all models and select layer 13 for Llama2-7b, layer 11 for Llama3.2-1b, layer 21 for Llama3-8b and layer 24 for Qwen3-4b.

\begin{figure}[t]
    \centering
    \includegraphics[width=\linewidth]{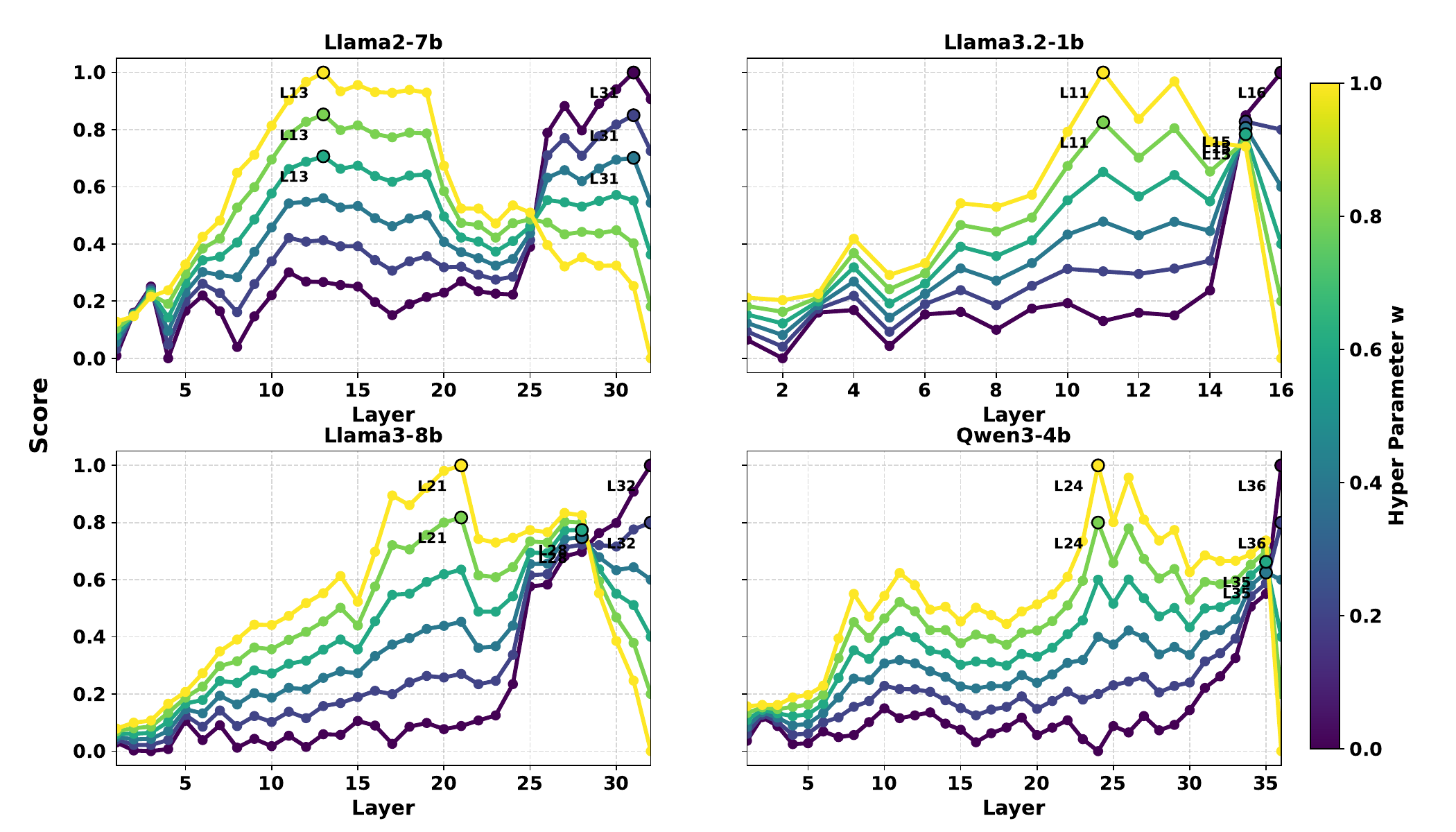}
    \caption{Layer-wise score progression across different models under varying hyperparameter $w$. Each curve represents a different $w$ value and the color of the curve regarding the value of $w$ according to the color bar on the right.}
    \label{fig:hyper_parameter_w_all}
\end{figure}

\section{Kruskal-Wallis test}\label{app:kruskal}
\subsection{Defination}
The Kruskal-Wallis test~\cite{kruskal1952use} is a non-parametric statistical test used to determine 
whether multiple independent groups originate from the same distribution.  
It generalizes the Wilcoxon rank-sum test~\cite{wilcoxon1992individual} to the case of $k$ groups.

Suppose we have $k$ independent groups $G_1, G_2, \ldots, G_k$ with group sizes $n_1, n_2, \ldots, n_k$, and a total sample size 
$
N = \sum_{j=1}^{k} n_j.$
At the ranking step, all observations from all groups are pooled together and assigned ranks 
$R_{ij}$, where $R_{ij}$ is the rank of the $i$-th observation in group $j$ 
after sorting the entire dataset.  
The sum of ranks for each group is$
T_j = \sum_{i=1}^{n_j} R_{ij}$.

Then the Kruskal--Wallis statistic $H$ is computed as
\[
H = 
\frac{12}{N(N+1)}
\sum_{j=1}^{k} \frac{T_j^2}{n_j}
- 3(N+1).
\]

When tied ranks exist, a tie-adjustment factor is applied to ensure validity.  
Under the null hypothesis
\[
H_0: \text{all } k \text{ groups come from the same distribution}.
\]
The statistic $H$ follows an asymptotic chi-square distribution $
H \sim \chi^2_{k-1}$.

The decision rule is that given the computed statistic $H$ and degrees of freedom $k - 1$, the $p$-value is
\[
p = 1 - F_{\chi^2_{k-1}}(H),
\]
where $F$ is the chi-square CDF.  
A small $p$-value ($<0.05$) indicates that at least one group differs in distribution 
from the others.

\subsection{Kruskal-Wallis test Result}
 The result is shown in Table \ref{tab:kruskal_test} and for all LLMs, $p$-value $>0.05$ indicates that the result is non-significant and thus, sampling temperature does not introduce measurable distributional shifts in the SR of \methodname{} (S). For \methodname{} (D), while the SR shows a slight increase, the distribution also shifts during the sampling temperature increase. This result also supports the analysis presented in Section \ref{sec:ablation_study}, regarding the performance differences between modes S and D, as well as the trend of SR varying with sampling temperature.
\begin{table}[h!]
\centering
\begin{tabular}{c|cc}
\toprule

Model & H(9) & p-value \\
\midrule
\multicolumn{3}{c}{Mode S} \\
\midrule
Llama2-7b      & 0.6584  & 0.9953 \\
Llama3.2-1b    & 6.2894  & 0.3916 \\
Llama3-8b      & 3.1877  & 0.7850 \\
Qwen3-4b       & 1.8333  & 0.9344 \\
\midrule
\multicolumn{3}{c}{Mode D} \\
\midrule
Llama2-7b      & 7.6408  & 0.3653 \\
Llama3.2-1b    & 113.6126 & 1.63$\times$10$^{-21}$ \\
Llama3-8b      & 140.8080 & 3.44$\times$10$^{-27}$ \\
Qwen3-4b       & 3.4114  & 0.8445 \\
\bottomrule
\end{tabular}
\caption{Kruskal--Wallis test results by model for \methodname{} steering.}
\label{tab:kruskal_test}
\end{table}

\section{Dataset}
\subsection{Color Task}\label{app:color_dataset}
\subsubsection{Make datapoints for Color Task}
The color task contains 1,146 product nouns and uses their related colors as attributes. The nouns are selected from the LVIS~\citep{gupta2019lvis} dataset. We only use the category names, which are 1,202 nouns in total. The colors related to these nouns are then extracted from the Redpajiama subset~\citep{weber2024redpajama}. Based on the co-occurrence frequency of nouns and colors in the RedPajama dataset, we matched each noun with two colors: the most frequently co-occurring color (primiary color) and the second most frequently one (secondary color). Some nouns basically do not have color (e.g Internet, person, chap), or we only get ``color not found''. We manually removed 56 such data points and leaving 1,146 nouns have valid primary and secondary color.

\subsubsection{Select Datapoints in the Color Task}
We divide the color datapoints by conducting an \textit{option-swapping QA task} to find the subset where the model shows bias toward primary color. Since LLMs have bias in multipul-choice task \citep{zheng2024largelanguagemodelsrobust,pezeshkpour-hruschka-2024-large}, which will make them favor Option A, we use prompt ``The color of a \{$\text{noun}$\} is \{$\text{color}_{pri}$\} or \{$\text{color}_{sec}$\}?'' as Easy to select $\text{color}_{pri}$ prompt and ``The color of a \{$\text{noun}$\} is \{$\text{color}_{sec}$\} or \{$\text{color}_{pri}$\}?'' as Hard to select $\text{color}_{pri}$ prompt, we can filter the noun-color pairs which LLMs have strong empirical knowledge on. If it constantly selects \{$\text{color}_{pri}$\} instead of the first option, this means that they have a strong empirical bias on the color they chose. Thus, the biased subset can be used to test the performance of patchscopes in biased cases. When we patch these nouns with secondary color information, and the target model successfully reveals this color in the context it generated, this means that the model overcomes empirical bias and effectively translates the hidden representation. 
Table \ref{table:color_dataset} shows the case statistics of each model. 

\begin{table}[t]
\centering
\begin{tabular}{lcc}
\toprule
Model & Biased & Non-biased \\
\midrule
Llama2-7b      & 166 & 980 \\
Llama3.2-1b   & 344 & 802 \\
Llama3-8b      & 483 & 663 \\
Qwen3-4b       & 516 & 630 \\
\bottomrule
\end{tabular}
\caption{Model bias statistics. \textit{Hard} column shows the number of cases where the model chose the primary color under the hard prompt, \textit{Easy} under the easy prompt, and \textit{Bias} under both.}
\label{table:color_dataset}
\end{table}

\subsection{Gender, Culture and Age tasks}\label{app:biased_dataset}
\subsubsection{Dataset Description}
These datapoints are a subset of the dataset provided by \cite{hernandez2024linearity}. The dataset proposed by \cite{hernandez2024linearity} contains four categories:  factual associations, commonsense knowledge, implicit biases, and linguistic knowledge, while we only use the implicit biases subset. In this subset, there are seven relation and for easy statistics, we merge them into three tasks, gender, culture and age. Table \ref{tab:bias_dataset} shows what is contained in each task and also the number of examples per relation.

\begin{table}[]
    \centering
    \begin{tabular}{lll}
        \toprule
         Task& Concept& Number  \\
         \midrule
         \multirow{4}{*}{Gender}
         & degree & 38\\
         & characteristic & 30 \\
         & name & 19 \\
         & occupation & 19 \\
         \midrule
         Birthplace (Culture) & name & 31\\
         Religion (Culture) & name & 31 \\
         \midrule
         Age & occupation & 45 \\
         
         \bottomrule
    \end{tabular}
    \caption{Number of examples per relation in the implicit biases subset. Birthplace and Religion bias on name are merged as Culture bias in the main paper.}
    \label{tab:bias_dataset}
\end{table}

\subsubsection{Source and Target Prompts for Gender Biased Dataset}

For the four subcategories of gender biased tasks, the prompts we use are shown in Table \ref{tab:prompts_for_gender}. For simplify notation, in source prompt, we only denote the gender which model does not toward to as gender$_{sec}$. When implemented, the word can be substituted with the pronoun that corresponds to the gender of the subject. In the occupation and characteristic category, gender$_{sec}$ is man or woman. In the name category, gender$_{sec}$ is She or He. In the degree category, gender$_{sec}$ is male or female. Also, the target prompt $T$ shown in the table has ``She'' as the first option and ``He'' as the second. In experiments, we also forward the counterpart $T'$, where the options' positions are reversed. 

\begin{table}[t]
\centering
\resizebox{\linewidth}{!}{
\small
\begin{tabular}{lp{0.7\linewidth}}
\toprule
Category & Prompts \\
\midrule
\multirow{2}{*}{Occupation} & \textbf{Source: }This is an exceptional \{occupation\} \{gender$_{sec}$\}. \\
& \textbf{Target: } Replace ? with ``She'' or ``He''. ? is an exceptional \{x\}.\\
\midrule
\multirow{2}{*}{Name} & \textbf{Source: }\{Gender$_{sec}$\} has the name \{name\}. \\
& \textbf{Target: }Replace ? with ``She'' or ``He''. ? has the name \{x\}.\\
\midrule
\multirow{2}{*}{Degree} & \textbf{Source: }This is a \{gender$_{sec}$\} outstanding student in department of \{degree\}. \\
& \textbf{Target: }Replace ? with ``She'' or ``He''. ? stands out in the department of \{ x \}.\\
\midrule
\multirow{2}{*}{Characteristic} & \textbf{Source: }This \{gender$_{sec}$\} is a \{characteristic\} person. \\
& \textbf{Target: }Replace ? with ``She'' or ``He''. ? is a \{x\} person.\\
\bottomrule
\end{tabular}}
\caption{The source and target prompts for different categories of gender biased tasks. The designed prompt is to jailbreak the sexism alignment. }
\label{tab:prompts_for_gender}
\end{table}

\subsection{Examples for each task}\label{app:example_task}
In this section, we provide examples of the source and target prompts for each task. The noun we want to explain is highlighted with \underline{underline}.
\begin{itemize}
    \item \textbf{Color:} Source Prompt: \textit{Here is a purple \underline{broccoli}. }Target Prompt: \textit{The color of x is green or purple?}
    \item \textbf{Gender:} Source Prompt: \textit{This man is a \underline{compassionate} person.}
    Target Prompt: \textit{Replace z with He or She. z is a x person.}
    \item \textbf{Culture:} Source Prompt: \textit{Here is a Jewish \underline{Ajahn}.}
    Target Prompt: \textit{The religion of x is Buddhism or Judaism?}
    \item \textbf{Age:} Source Prompt: \textit{Here is an exceptional young \underline{mentor}.}
    Target Prompt: \textit{The age of x is old or young?}
\end{itemize}

\end{document}